\newtheorem{theorem}{Theorem}[section]
\newtheorem{corollary}[theorem]{Corollary}
\newtheorem{lemma}[theorem]{Lemma}
\newtheorem{assumption}[theorem]{Assumption}
\newcommand{\R}{\mathbb{R}}
\newcommand{\C}{\mathbb{C}}
\newcommand{\N}{\mathbb{N}}
\newcommand{\Sb}{\mathbb{S}}
\newcommand{\HH}{\mathcal{H}}
\newcommand{\FF}{\mathcal{F}}
\newcommand{\mb}{\mathbf}
\DeclareMathOperator{\vol}{Vol}
\DeclareMathOperator{\dist}{dist}
\begin{document}
\title{Spectral Convergence Rate of Graph Laplacian}

\author{Xu~Wang\\\\
       Dept.\ of Mathematics, \\ University of California, San Diego, CA
       92093, USA\\ 
       xuw014@ucsd.edu
       }

\maketitle

\begin{abstract}
Laplacian Eigenvectors of the graph constructed from a data set are used in many spectral manifold learning algorithms such as diffusion maps and spectral clustering. Given a graph constructed from a random sample of a $d$-dimensional compact submanifold $M$ in $\R^D$, we establish the spectral convergence rate of the graph Laplacian. It implies the consistency of the spectral clustering algorithm via a standard perturbation argument. A simple numerical study indicates the necessity of a denoising step before applying spectral algorithms.
\end{abstract}

\section{Introduction}

High-dimensional data appears naturally in real-world applications. A common assumption is that the data resides on a low-dimensional manifold. Since the underlying manifold is usually unknown, Belkin and Niyogi~\cite{DBLP:conf/nips/BelkinN01} proposed the framework to use the graph Laplacian to approximate the Laplace operator on such a manifold. Many manifold learning algorithms such as diffusion maps~\cite{MR2206766}, Hessian eigenmaps~\cite{Donoho13052003}, spectral clustering~\cite{DBLP:conf/nips/NgJW01} are based on the eigenvectors of graph Laplacians.

It is desirable to understand the approximation quality of the graph Laplacian from finite samples to the Laplacian operator on the underlying manifold. Indeed, there has been a flurry of works investigating different types of convergence for the graph Laplacian. For example, the pointwise convergence has been shown in~\cite{DBLP:conf/colt/HeinAL05, MR2238670, MR2460286}. Regarding the spectral convergence, Belkin and Niyogi~\cite{DBLP:conf/nips/BelkinN06} showed the convergence of eigenvectors under the assumption that the dataset is sampled from the uniform distribution on a manifold without noise. Luxburg et al.~\cite{MR2396807} studied the graph Laplacian constructed by kernels with fixed bandwidth $h$ and provided spectral convergence rate analysis under a general probability model. We note that these spectral convergence results are more relevant to algorithms related to Laplacian eigenvectors than the pointwise convergence results.

Unfortunately, neither~\cite{DBLP:conf/nips/BelkinN06} or~\cite{MR2396807} provided a clear picture as for how the bandwidth $h$ should be scaled w.r.t. the data size $n$. As a consequence, the choice of $h$ is still unprincipled. More recently, Trillos and Slep$\check{c}$ev~\cite{Nicolas15} studied the convergence rate with varying bandwidth $h$ in the case when dataset is sampled from an open, bounded, connected set with Lipschitz boundary $S\subset \R^D$. It is claimed in~\cite{Nicolas15} that their convergence rate is (almost) optimal, depending on the ambient dimension $D$.

In this paper, we consider the model of $d$-dimensional manifolds in a high dimension $\R^D$ ($d\ll D$), which is not included in the analysis of~\cite{Nicolas15} (since it is not an open set). It is expected that the convergence rate should depend on the intrinsic dimension $d$ rather than the ambient dimension $D$. The goal of this paper is to analyze the convergence rate of graph Laplacians w.r.t. varying bandwidth $h$ under this manifold modeling. 

We show that the convergence rate indeed depends only on $d$ (Theorem~\ref{thm:cr} and Corollary~\ref{cor:cr}) in the noise-free case. When noise is added so that points can lie off the manifold, the convergence rate depends on the ambient dimension $D$. Indeed, we show in the numerical study that the choice of the bandwidth $h$ is heavily influenced by the ambient dimension. This observation indicates that denoising should be a necessary step before applying spectral algorithms in order to achieve better convergence rate (ideally independent of $D$). In this paper, we answer the question how the kernel bandwidth $h$ and $n$ should be scaled when taking limits in both $h$ and $n$ simultaneously. We then apply the results to the spectral clustering algorithm to obtain its consistency (Corollary~\ref{cor:cs}).

\section{Background}

\subsection{Graph Laplacian}
Given a dataset $X=\{x_i\}_{i=1}^n$ i.i.d.~sampled from a compact manifold $M$ with a prbability measure $P$ and a similarity matrix $K=\left( k_{ij} \right)_{i,j=1,\ldots, n}$, we define the diagonal degree matrix $D$ with the $i$-th diagonal entry $d_i = \sum_{x_j \in X} k_{ij}$ and the symmetric normalized graph Laplacian as
\begin{equation}\label{eq:deflap}
L = I - D^{-1/2} K D^{-1/2}.
\end{equation}
To talk about the convergence of the $n$-by-$n$ matrix $L$ as $n\rightarrow \infty$, we need to work in a proper function space. We consider the $L_2$-space $L_2(P)$ over $M$ and construct relevant continuous linear operators on it. First of all, we define the degree function as
\begin{equation}\label{eq:defdeg}
d_h(x) = \int_M k_h(x,y) d P(y),
\end{equation}
which describes the local density at the scale $h$. Then we define two linear operators on $L^2(P)$ as follows:
\begin{equation}\label{eq:defT}
\begin{aligned}
T_h f &= \int_M k_h(x,y)(d_h(x)d_h(y))^{-1/2} f(y) d P(y), \\
T_{h}^u f &= d_h(x)^{-1} \int_M k_h(x,y) f(y) d P(y).
\end{aligned}
\end{equation}
The first operator is the symmetric normalized graph Laplacian. The second operator is the random walk normalized graph Laplacian. To make a connection between the above linear operator $T_h$ with $L$, we further define a linear operator $T_{n,h}$ on $L^2(P)$ based on the dataset $X$. It is obtained by replacing $P$ in~\eqref{eq:defdeg} and~\eqref{eq:defT} with the empirical measure. Specifically,
\begin{equation}\label{eq:defnT}
T_{n,h} f = \sum_{y_i\in X} k_h(x,y_i)(d_{n,h}(x)d_{n,h}(y_i))^{-1/2} f(y_i),
\end{equation}
with $d_{n,h}(x) = \sum_{y_i \in X} k_h(x,y_i)$. 

It is shown in Proposition 9 of ~\cite{MR2396807} that the spectrum of $L$ and $I - T_{n,h}$ are more or less the same and the eigenvectors of $L$ is the restriction of eigenfunctions of $I - T_{n,h}$ on the dataset $X$. Indeed, if we make the following identification:
$$
f\in L^2(M) \mapsto \left(f(x_1), \ldots, f(x_n)\right)^T\in \R^n \text{ for } x_i\in X.
$$
and define $\hat{T}_{n,h}$ on $\R^n$ to be the matrix with entries
$$
(\hat{T}_{n,h})_{ij} = k_h(x_i, x_j)(d_{n,h}(x_i)d_{n,h}(x_j))^{-1/2},
$$ 
then $L = I - \hat{T}_{n,h}$. To show the convergence of $L$, it is enough to show the spectral convergence of $T_{n,h}$ to $T_h$ over $L_2(P)$. 

Luxburg et al.~\cite{MR2396807} analyzed the spectral convergence under the assumption that $k(x,y)$ is a symmetric, continuous and strictly positive function. They showed the convergence to some limiting object but no justification regarding whether such a limit itself produces the ``correct'' result for spectral clustering. The main constraint is that their analysis did not allow varying bandwidth $h$. In this work, we assume that $k_h(x, y)$ is the Gaussian kernel with varying bandwidth $h$. We provide more detailed analysis on the effect of $h$ w.r.t. the spectral convergence rate. It sheds light on how $h$ is scaled with $n$ when taking limit of both $h$ and $n$. It also turns out to be crucial in justifying the spectral clustering via a perturbation argument.

\subsection{Spectral Clustering}

This section briefly reviews the spectral clustering algorithm. A detailed introduction of spectral clustering can be found in~\cite{luxburg_tutorial}. Spectral clustering can be summarized as follows:
\begin{algorithm}[H]
\caption{Spectral clustering with symmetric normalized graph Laplacian}
\label{algm:sc}
\begin{algorithmic}
\REQUIRE $X=\{x_1, x_2,\ldots, x_n\}\in \R^d$: data, $m$: number of clusters, $h$: scaling parameter for kernel.
\ENSURE Index set $\{g_i\}_{i=1}^n$ for $m$ partitions such that $x_i\in X$ belongs to group with label $g_i$.\\
\textbf{Steps}:\\
$\bullet$ Find the smallest $m$ eigenvectors $\{\hat{\mb{v}}_i\}_{i=1}^m$ for $L$ (see~\eqref{eq:deflap}). \\
$\bullet$ Normalize each row of $[\hat{\mb{v}}_2,\cdots, \hat{\mb{v}}_m]$ and call $K$-means on the rows.

\RETURN A cluster label $g_i$ for each point $x_i$.

\end{algorithmic}
\end{algorithm}

The intuition behind Algorithm~\ref{algm:sc} is that if $(k_{ij})_{i,j=1}^n$ defines a graph with multiple components, each of which corresponds to a cluster, then the null space of $L$ is spanned by indicator vectors of these clusters. Unfortunately, there is no selection of kernels that is guaranteed to satisfy this condition unless the clusters are known beforehand. Luxburg et al.~\cite{luxburg_tutorial} mentioned to apply a perturbation argument (via Davis-Kahn theorem) to cases when the graph ``almost'' satisfies this condition, this idea is not carried out rigorously. As an application of our spectral convergence rate result, we work out the details of this perturbation argument and identify the hidden assumptions under which it is valid. Indeed, we show that the perturbation argument works when $k_h(x,y)$ has exponential decay. Taking the statistical point of view, we show that the partition of data converges to some limiting partition of the domain, from which the data is i.i.d.~sampled.

\subsection{Notation}

In this section, we introduce more notation that is used later. Let $(M, \dist_M, P)$ be a Riemannian manifold with a probability $P$. Here $\dist_M$ denotes the intrinsic metric on $M$. For a function $f$ over $M$ and a dataset $X = \{x_i\}_{i=1}^n$ i.i.d.~sampled from $P$, we denote the expectation $\int_{M} f dP$ and the empirical expectation $\displaystyle \sum_{x_i\in X} f(x_i)$ by $Pf$ and $P_n f$ respectively. Let $N(M, \epsilon, \dist_M)$ be the $\epsilon$-covering number of $M$. Let $k_h(\mb{x}, \mb{y}) = \exp(-\|\mb{x}- \mb{y}\|^2/h^2)$. Let $u(\cdot)$ be a function over $M$. We define four function spaces $\mathcal{K}_h$, $\HH_h$, $u\HH_h$ and $\HH_h \HH_h$ over $M$ as follows:
\begin{equation}
\begin{aligned}
& \mathcal{K}_h = \{k_h(x, \cdot) | x\in M\}, \HH_h = \{k_h(x,\cdot)(d_h(x)d_h(\cdot))^{-1/2}| x\in M\}, \\
& u\HH_h = \{u(\cdot)k_h(x,\cdot)(d_h(x)d_h(\cdot))^{-1/2}| x\in M\}, \\
& \HH_h\HH_h = \{k_h(x,\cdot)(d_h(x)d_h(\cdot))^{-1/2}k_h(y,\cdot)(d_h(y)d_h(\cdot))^{-1/2}| x,y\in M\}.
\end{aligned}
\end{equation}
$k_h(\mb{x}, \mb{y})$ is equivalent to the standard Gaussian kernel up to a constant factor. Since the operators are normalized by the degree function (with the constant factor being cancelled), the resulting operators are exactly the same. $C(M)$ denotes the set of continuous functions over $M$. We use $\|\cdot\|$ to denote the $L_{\infty}$-norm. Let $\Delta_M$ ($\Delta_P$) be the (weighted) Laplacian operator on $M$ with discrete spectrum $0=\lambda_1 < \lambda_2 \leq \cdots$. Let $\lambda_h$ be the $i$-th largest eigenvalue of $T_h$ with the eigenfunction $u_h$ s.t. $\|u_h\| = 1$. Let $\lambda_{n,h}$ be the $i$-th largest eigenvalue of $T_{n,h}$ and $u_{n,h}$ be its eigenfunction s.t. $\|u_{n,h}\| = 1$. We also introduce an operator lying between $T_h$ and $T_{n,h}$, 
$$
\bar{T}_{n,h} = \sum_{y_i\in X} k_h(x,y_i)(d_{h}(x)d_{h}(y_i))^{-1/2} f(y_i).
$$
We note that $\bar{T}_{n,h}$ differs from $T_{n,h}$ by replacing the empirical degree function $d_{n,h}$ with the degree function $d_h$. Let $\sigma(T)$ be the spectrum of the linear operator $T$. Let $R_z(T) = (T - zI)^{-1}$ be the resolvent of $T$. 

\section{Main results}

\subsection{Spectral Convergence}

Theorem~\ref{thm:cr} provides the spectral convergence rate of the graph Laplacian $T_{n,h}$ to $T_h$ (defined in ~\eqref{eq:defnT} and~\eqref{eq:defT}). When proving Theorem~\ref{thm:cr}, we need the following assumptions on the underlying manifold $M$ and the probability measure $P$.

\begin{assumption}\label{assum1} 
(1) $M$ has a bounded diameter $D_{M}$ (in terms of the Riemannian metric);
(2) $P$ has support on $M$ and a continuous density function $p$ with positive lower bound $c_p>0$;
(3) $\lambda_i$ is a simple eigenvalue of $\Delta_P$.
\end{assumption}

Under the above assumptions, the $i$-th largest eigenvalue $\lambda_{n,h}$ and its eigenfunction $u_{n,h}$ of $T_{n,h}$ converge to those of $T_h$ in Theorem~\ref{thm:cr}.

\begin{theorem}\label{thm:cr}
Let $(M, P)$ be a $d$-dimensional compact Riemannian manifold in $\R^D$ with a probability measure $P$. Let $X$ be a dataset of size $n$, i.i.d.~sampled from $P$. Let $T_{n,h}$ and $T_h$ be constructed by using $k_h(\mb{x}, \mb{y}) = \exp(-\|\mb{x}- \mb{y}\|^2/h^2)$. If $\sqrt{n}h^{4d+3}\geq C$ and $h\leq h_0$, then with probability at least $1-\delta$,
$$|\lambda_{n,h} - \lambda_{h}| \leq \frac{C'}{\sqrt{n}h^{5d+3}}, \quad\|a_n u_{n,h} - u_h\| \leq \frac{C''}{\sqrt{n}h^{4d+3}},$$
for some constants $C, C', C''>0$ and a sequence $a_n \in \{-1, 1\}$. In particular, if $\sqrt{n}h^{5d+3}\rightarrow \infty$ and $h\leq h_0$, then
$$
|\lambda_{n,h} - \lambda_{h}| \rightarrow 0, \quad \|a_n u_{n,h} - u_h\| \rightarrow 0.
$$
with probability at least $1-\delta$.
\end{theorem}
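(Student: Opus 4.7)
Plan: The strategy is to decompose $T_{n,h}-T_h$ through the intermediate operator $\bar{T}_{n,h}$, bound each piece by empirical-process concentration on function classes whose covering numbers are governed by the intrinsic dimension $d$, and then feed the resulting operator-norm bound into standard resolvent perturbation theory.

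First, I split
$$T_{n,h} - T_h = \bigl(T_{n,h} - \bar{T}_{n,h}\bigr) + \bigl(\bar{T}_{n,h} - T_h\bigr).$$
The second summand only replaces $P$ by the empirical measure $P_n$ in the integral kernel, so its operator norm on $L^2(P)$ is controlled by $\sup_{f\in\HH_h}|(P_n-P)f|$ directly and, for the Hilbert--Schmidt norm, by $\sup_{f\in\HH_h\HH_h}|(P_n-P)f|$. Using the intrinsic covering number $N(M,\epsilon,\dist_M)=O(\epsilon^{-d})$ (finite by Assumption~\ref{assum1}(1)), a bracketing/chaining argument together with the Gaussian $L^{\infty}$-envelope and the uniform lower bound $d_h\gtrsim c_p h^d$ (from Assumption~\ref{assum1}(2)) delivers the desired high-probability rate. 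The first summand is handled by a first-order expansion of
$$(d_{n,h}(x)d_{n,h}(y))^{-1/2} - (d_h(x)d_h(y))^{-1/2},$$
which reduces the task to controlling $\sup_{x\in M}|d_{n,h}(x)-d_h(x)|$ via concentration on the smaller class $\mathcal{K}_h$; the covering number is again intrinsically $d$-dimensional, and the $h^{-d}$ factors produced by inverting the degree function yield the polynomial-in-$h^{-1}$ penalty.

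Next, I feed the resulting operator-norm bound into perturbation theory for compact operators. Because $\lambda_i$ is a simple eigenvalue (Assumption~\ref{assum1}(3)) and $T_h$ has discrete spectrum, I fix a small contour $\Gamma$ in the resolvent set of $T_h$ enclosing only $\lambda_h$, with $\sup_{z\in\Gamma}\|R_z(T_h)\|$ bounded in terms of the spectral gap. The hypothesis $\sqrt n\, h^{4d+3}\geq C$ is tuned so that the perturbation is small enough for $\Gamma$ to lie in the resolvent set of $T_{n,h}$ as well; the second resolvent identity
$$R_z(T_{n,h})-R_z(T_h)=R_z(T_{n,h})\,(T_h-T_{n,h})\,R_z(T_h)$$
then shows that the rank-one spectral projections $\Pi_h,\Pi_{n,h}$ obtained by contour integration of the resolvents around $\Gamma$ differ by $O(\|T_{n,h}-T_h\|_{L^2\to L^2})$. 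After choosing $a_n\in\{-1,1\}$ to align the one-dimensional eigenspaces, this yields the $L^2$ estimate on the eigenfunctions; lifting to the $L^\infty$ norm uses the smoothing property of the Gaussian integral kernel, which maps $L^2(P)$ into $C(M)$. The eigenvalue estimate follows from the identity $\lambda_{n,h}=\tr(T_{n,h}\Pi_{n,h})$, where the extra $\|T_{n,h}\|$-factor inside the trace produces the additional $h^{-d}$ loss that distinguishes the exponent $5d+3$ from $4d+3$.

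The main obstacle is the empirical-process step with the correct intrinsic-dimensional dependence. A naive covering argument in the ambient space would introduce an $h^{-D}$ rather than $h^{-d}$ penalty and destroy the conclusion; the key point is that $k_h(x,\cdot)$ is effectively localised (up to Gaussian tails) to a geodesic ball of radius comparable to $h$ in $M$, so the $\epsilon$-covering number of $\HH_h$ under $\|\cdot\|_\infty$ depends only on $d$ and is polynomial in $h^{-1}$. Precisely tracking the powers of $h^{-1}$ through the normalisations, the Hilbert--Schmidt reduction, and the contour estimate is what pins down the specific exponents $4d+3$ and $5d+3$ in the final bound.
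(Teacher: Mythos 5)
Your skeleton matches the paper's: decompose through the intermediate operator $\bar{T}_{n,h}$, control the empirical fluctuations via covering numbers of $M$ that depend only on the intrinsic dimension $d$, lower-bound the degree functions by $c_p h^d$, and extract eigenfunction convergence from a contour integral of resolvents around the simple eigenvalue $\lambda_h$. However, there is a genuine gap at the perturbation step. You claim that the spectral projections differ by $O(\|T_{n,h}-T_h\|_{L^2\to L^2})$ via the second resolvent identity, i.e.\ you feed an \emph{operator-norm} bound on $T_{n,h}-T_h$ into the contour estimate. That quantity is not small: $\bar{T}_{n,h}-T_h$ applied to a general unit-norm function $g$ is $(P_n-P)\bigl[g\cdot k_h(x,\cdot)(d_h(x)d_h(\cdot))^{-1/2}\bigr]$, and taking the supremum over all such $g$ forces a uniform law of large numbers over the (non-totally-bounded) unit ball of $C(M)$ multiplied into $\HH_h$; choosing $g$ to match signs at the sample points shows the operator norm stays of order $1$. (Also, the empirical operators involve point evaluations and are not even bounded on $L^2(P)$; the whole argument must live in $C(M)$ with the sup norm, which is where the paper works.) The paper circumvents this by invoking Osborn's resolvent inequality (Theorem~1 of~\cite{MR0220105}), which requires smallness only of $\|(T_h-T_{n,h})u_h\|$ and of the \emph{composed} quantity $\|(T_h-T_{n,h})T_{n,h}\|$ --- precisely the quantities that the function classes $u_h\HH_h$ and $\HH_h\HH_h$ in $\FF_h$ are built to control via Lemmas~\ref{lm:nT} and~\ref{lm:normT}. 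Your proposal lists $\HH_h\HH_h$-type classes but then does not use them where they are indispensable; without replacing the naive operator-norm perturbation by a collectively-compact-type argument of this kind, the contour step fails.

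Two smaller divergences are worth noting but are not fatal. Your eigenvalue bound via $\lambda_{n,h}=\tr(T_{n,h}\Pi_{n,h})$ differs from the paper's direct triangle inequality on $\lambda_h u_h-\lambda_{n,h}a_nu_{n,h}=T_hu_h-T_{n,h}(a_nu_{n,h})$, but both produce the extra $\|T_{n,h}\|\le (C_\delta h^d)^{-1}$ factor that accounts for the exponent $5d+3$ versus $4d+3$. Your justification of the intrinsic covering number by Gaussian localisation is also different from the paper's, which simply combines the $\epsilon^{-2d}$ covering number of $M$ itself with the $O(h^{-2})$ Lipschitz constant of $x\mapsto k_h(x,\cdot)$; either route gives a bound polynomial in $h^{-1}$ with exponent depending only on $d$.
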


The constants $C, C', C''$ depends on the Riemannian manifold $M$ and the probability $P$. In the proof of Theorem~\ref{thm:cr}, all constants are clearly defined. We summarize the dependence and references of these constants in the proof for the readers' convenience as follows.

\begin{itemize}

\item $C_1$ is defined in Lemma~\ref{lm:empbd}. It depends on the dimension $d$, the failure probability $\delta$.
\item $C_2, C_{\delta}$ are defined in Lemma~\ref{lm:dg}. They depend on $C_1$ and the lower bound $c_p$ of the probability density and the curvature of $M$.
\item $C_{\sigma}$ and $h_0$ are defined in Lemma~\ref{lm:eigen-gap}. They depend on the eigen-gap between $\lambda_i$ and the rest spectrum of the Laplacian operator on $M$. 
\item $C = \max \left( \left( \frac{3}{C_{\delta}^4}+1 \right) \frac{4C_1}{C_{\sigma}}, C_2 \right)$ is defined in Lemma~\ref{lm:numerical}. $C' = \frac{C_1}{C_{\delta}^3} +C_1 +\frac{2C''}{C_{\delta}}$ is defined in~\eqref{eq:nlbd} and $C'' = \frac{8\pi C_1}{C_{\sigma}} \left( \frac{1}{C_{\delta}^3}  + \frac{3}{C_{\delta}^4}+2  \right) $ is defined in~\eqref{thm:convvec1}.

\end{itemize}

When $P$ is the uniform distribution on $M$, Belkin and Niyogi~\cite{DBLP:conf/nips/BelkinN06} studied the convergence rate of $\frac{1-T_h}{h^2}$ to the Laplacian operator $\Delta_M$. Combining this result with Theorem~\ref{thm:cr}, we obtain the convergence rate of $T_{n,h}$ to the Laplacian operator $\Delta_M$ in Corollary~\ref{cor:cr}.

\begin{corollary}\label{cor:cr}
In addition to the same assumptions as in Theorem~\ref{thm:cr}, we assume furthermore $P$ is the uniform distribution on $M$. Then the convergence rate of $\frac{1-\lambda_{n,h}}{h^2}$ and $u_{n,h}$ to $\lambda_i$ and $u_i$ respectively is $O(n^{-2/(5d+6)(d+6)})$. It is achieved by Picking $h = n^{-1/(10d+12)}$.
\end{corollary}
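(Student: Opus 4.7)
The plan is a standard triangle-inequality decomposition that separates a \emph{sampling} error (controlled by Theorem~\ref{thm:cr}) from a \emph{bias} error (controlled by the Belkin--Niyogi theorem for uniform $P$). Concretely, I would write
\begin{equation*}
\left|\frac{1-\lambda_{n,h}}{h^{2}}-\lambda_{i}\right|
\;\le\;
\frac{|\lambda_{n,h}-\lambda_{h}|}{h^{2}}
\;+\;
\left|\frac{1-\lambda_{h}}{h^{2}}-\lambda_{i}\right|,
\end{equation*}
and, for the eigenfunctions, the analogous split $\|a_{n}u_{n,h}-u_{i}\|\le \|a_{n}u_{n,h}-u_{h}\|+\|u_{h}-u_{i}\|$. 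The first summand in each expression is a purely stochastic quantity that measures how well the empirical operator $T_{n,h}$ approximates the population operator $T_{h}$; the second is a purely analytic quantity that measures how well the kernel-based object $(1-T_{h})/h^{2}$ (respectively, its eigenfunctions) approximates the Laplace--Beltrami operator $\Delta_{M}$ and its eigenfunctions.

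For the sampling term, I would invoke Theorem~\ref{thm:cr} directly: under Assumption~\ref{assum1}, with probability at least $1-\delta$ one has $|\lambda_{n,h}-\lambda_{h}|\le C'/(\sqrt{n}\,h^{5d+3})$ and $\|a_{n}u_{n,h}-u_{h}\|\le C''/(\sqrt{n}\,h^{4d+3})$ as soon as $\sqrt{n}\,h^{4d+3}\ge C$ and $h\le h_{0}$. After dividing the eigenvalue bound by $h^{2}$ the sampling contribution becomes $O(1/(\sqrt{n}\,h^{5d+5}))$, while the eigenfunction contribution is already $O(1/(\sqrt{n}\,h^{4d+3}))$. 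Because $P$ is uniform no further work is needed here: Theorem~\ref{thm:cr} is applied verbatim.

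For the bias term, I would cite the Belkin--Niyogi spectral convergence result~\cite{DBLP:conf/nips/BelkinN06}, which under uniform sampling gives a quantitative estimate of the form $|(1-\lambda_{h})/h^{2}-\lambda_{i}|\le C_{1}\,h^{\beta}$ and $\|u_{h}-u_{i}\|\le C_{2}\,h^{\beta}$, with an exponent $\beta=\beta(d)$ whose value yields, after balancing, the rate stated in the corollary. The total error therefore has the shape
\begin{equation*}
\underbrace{\frac{C'}{\sqrt{n}\,h^{5d+5}}}_{\text{sampling}}\;+\;\underbrace{C_{1}\,h^{\beta}}_{\text{bias}},
\end{equation*}
and analogously for the eigenfunction (with $5d+5$ replaced by $4d+3$ in the sampling part, which is of the same order and does not change the balance).

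The only real work is the optimization in $h$. Setting the two terms equal gives $h^{\beta+5d+5}\asymp n^{-1/2}$, i.e.\ an optimal bandwidth of the form $h\asymp n^{-1/(2(\beta+5d+5))}$, and substituting back yields a rate of the form $n^{-\beta/(2(\beta+5d+5))}$. Matching the announced choice $h=n^{-1/(10d+12)}$ fixes the Belkin--Niyogi exponent and delivers the rate $O(n^{-2/((5d+6)(d+6))})$ claimed in the corollary. I expect the main obstacle to be bookkeeping: making sure that the exponent $\beta$ supplied by~\cite{DBLP:conf/nips/BelkinN06} is the one that produces the stated optimal $h$, and checking that the side condition $\sqrt{n}\,h^{4d+3}\ge C$ required by Theorem~\ref{thm:cr} is automatically satisfied at this choice of $h$ (which it is, since $\sqrt{n}\,h^{4d+3}=n^{(d+3)/(10d+12)}\to\infty$). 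Everything else is a routine application of the triangle inequality and of the cited results.
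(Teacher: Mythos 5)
Your decomposition is exactly the paper's: triangle inequality splitting the error into a sampling term controlled by Theorem~\ref{thm:cr} (giving $C'/(\sqrt{n}h^{5d+5})$ for the rescaled eigenvalue and $C''/(\sqrt{n}h^{4d+3})$ for the eigenfunction) and a bias term controlled by the Belkin--Niyogi result (the paper uses their Theorem~4.3 and Proposition~4.4 for the eigenvalues, plus a Davis--Kahan argument for the eigenfunctions). The structure is right, and your check that $\sqrt{n}h^{4d+3}\to\infty$ at the stated $h$ is correct.

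The one step that will not close as written is the final optimization. The Belkin--Niyogi exponent is $\beta=\tfrac{4}{d+6}$, and with that value your prescription ``set the two terms equal'' gives $h\asymp n^{-1/(2(5d+5+4/(d+6)))}$, which is \emph{not} $n^{-1/(10d+12)}$; conversely, reverse-engineering $\beta$ from $h=n^{-1/(10d+12)}$ via equal balancing forces $\beta=1$, which would give the rate $n^{-1/(10d+12)}$ rather than the claimed $n^{-2/((5d+6)(d+6))}$. So no single $\beta$ makes your matching procedure reproduce both the stated bandwidth and the stated rate. What the paper actually does is \emph{not} balance sampling against bias: it sets the sampling term equal to $h$ itself, i.e.\ $\tfrac{1}{\sqrt{n}h^{5d+5}}=h$, which yields $h=n^{-1/(10d+12)}$ and makes the sampling contribution $O(h)=o(h^{4/(d+6)})$; the bias term then dominates and the rate is $h^{4/(d+6)}=n^{-2/((5d+6)(d+6))}$. (This is a deliberately suboptimal balance; your equal-balance choice would in fact give a marginally better rate with a different $h$.) The fix to your write-up is small --- plug in $\beta=4/(d+6)$ explicitly and observe that at $h=n^{-1/(10d+12)}$ the bias dominates --- but as stated the ``matching'' step is internally inconsistent.
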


There are two factors (both depending linearly on $d$) that contribute to the quadratic dependence on $d$ for the convergence rate from $T_{n,h}$ to $\Delta_M$ (see Section~\ref{sec:cor:cr}). The first factor is the convergence from $T_{n,h}$ to $T_h$. The second factor is the convergence from $T_h$ to $\Delta_M$. It is still unknown whether the dependence on $d$ could be linear. One possibility is to improve the spectral convergence from $T_h$ to $\Delta_M$ since the pointwise convergence rate~\cite{MR2238670} is known to be independent of $d$. 

\subsection{Spectral Clustering: Two-region Models}

One advantage of spectral clustering is the ability of clustering regions with complex geometry. We first specify the multi-manifold model under which the consistency is established. Suppose $M = \displaystyle \cup_{i=1}^K M_i$ be the union of $K$ connected regions of dimension $d$ and the probability measures $P_i$ ($i=1,\ldots, K$) contained in the unit ball of $\R^D$. Let $b_1,\ldots, b_K$ be positive numbers s.t. $\sum_{i=1}^K b_i=1$. We assume $X$ is i.i.d.~sampled from $P = \sum_{i=1}^K b_i P_i$. We analyze the spectral clustering in the case when $K=2$. The cases for $K>2$ can be analyzed similarly. In addition to Assumption~\ref{assum1} on $\{(M_i, P_i)\}_{i=1}^2$, the following assumptions are made in Corollary~\ref{cor:cs}. 

\begin{assumption}\label{assum2}
(1) $M_1$ and $M_2$ are separated by a distance $c_d$. In other words, $\min_{x_1\in M_1,x_2\in M_2}dist(x_1, x_2) = c_d >0$.
(2) the null space of $\Delta_{P_i}$ is $1$-dimensional for $i=1,2$.
\end{assumption}

The consistency of spectral clustering in Corollary~\ref{cor:cs} is a direct consequence of the spectral convergence of graph Laplacian and a stability result on $K$-means~\cite{Nicolas15} in the last step of spectral clustering.

\begin{corollary}\label{cor:cs}
The spectral clustering is consistent in the multi-manifold modeling when $h\rightarrow 0$, $\sqrt{n}h^{5d+3}\rightarrow \infty$. That is, the algorithm can correctly identify the region for each submanifold and cluster the data points correctly with probability approaching $1$ as $n\rightarrow \infty$.
\end{corollary}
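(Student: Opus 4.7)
The plan is to combine Theorem~\ref{thm:cr} with an exponential decoupling of $T_h$ across the gap $c_d$ between $M_1$ and $M_2$, and then invoke the $K$-means stability result cited from~\cite{Nicolas15}. First, I would decompose $T_h = T_h^{\mathrm{diag}} + T_h^{\mathrm{off}}$, where $T_h^{\mathrm{diag}}$ is obtained by restricting the integral in~\eqref{eq:defT} to $y$ lying in the same component as $x$. Assumption~\ref{assum2}(1) forces every cross entry of the kernel to satisfy $k_h(x,y) \leq \exp(-c_d^2/h^2)$, so $\|T_h^{\mathrm{off}}\| = O(\exp(-c_d^2/h^2))$. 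On each component $M_i$, the direct computation $T_h\sqrt{d_h} = \sqrt{d_h}$ from~\eqref{eq:defT} together with positivity and connectedness of the kernel shows that $1$ is a simple top eigenvalue of the $i$-th block with eigenfunction $\sqrt{d_h}\mathbf{1}_{M_i}$. Hence $T_h^{\mathrm{diag}}$ has a $2$-dimensional top eigenspace at $1$ spanned by $\{\sqrt{d_h}\mathbf{1}_{M_1}, \sqrt{d_h}\mathbf{1}_{M_2}\}$, and Assumption~\ref{assum2}(2) on the null spaces of $\Delta_{P_i}$ yields a positive spectral gap separating this pair from the rest of the spectrum.

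Next, I would apply the resolvent-based spectral convergence underlying Theorem~\ref{thm:cr} on each component separately (where simplicity of the top eigenvalue holds) and glue the two one-dimensional convergence statements using $\|T_h^{\mathrm{off}}\|$ as a perturbation inside a Davis--Kahan rotation bound on the joint $2$-dimensional top eigenspace. The outcome is that the top two eigenfunctions of $T_{n,h}$ span a subspace that is within $O(1/(\sqrt{n}h^{4d+3}) + \exp(-c_d^2/h^2))$ of $\Span(\sqrt{d_h}\mathbf{1}_{M_1}, \sqrt{d_h}\mathbf{1}_{M_2})$ in the $L^\infty$ sense used throughout the paper. Since $\hat{\mb v}_1$ is the zero eigenvector of $L$, proportional to $\sqrt{d_{n,h}}$, orthogonality forces the Fiedler vector $\hat{\mb v}_2$ to be, up to a global sign and within the same error, of the form $c_1\sqrt{d_{n,h}(x_i)}$ on $X\cap M_1$ and $-c_2\sqrt{d_{n,h}(x_i)}$ on $X\cap M_2$ with $c_1, c_2 > 0$ determined by $\|\hat{\mb v}_2\| = 1$. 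Assumption~\ref{assum1}(2) provides a uniform positive lower bound on $d_{n,h}$, so after the row normalization in Algorithm~\ref{algm:sc} the samples in $X\cap M_1$ concentrate near $+1$ and those in $X\cap M_2$ near $-1$ on the unit circle in $\R$.

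At this stage the input to $K$-means consists of $n$ real numbers concentrating near two well-separated centers with vanishing within-cluster fluctuation, so the stability result of~\cite{Nicolas15} guarantees that the output partition coincides with $\{X\cap M_1, X\cap M_2\}$ up to relabeling, with probability $1 - o(1)$. Under the scaling $h \to 0$ and $\sqrt{n}h^{5d+3} \to \infty$, all three error contributions—the sampling error $1/(\sqrt{n}h^{4d+3})$ from Theorem~\ref{thm:cr}, the off-block error $\exp(-c_d^2/h^2)$, and the $K$-means stability error—tend to zero, yielding the consistency claim.

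The main obstacle is extending Theorem~\ref{thm:cr} from a simple-eigenvalue statement to one that covers the $2$-dimensional top eigenspace of $T_h$, since Assumption~\ref{assum1}(3) fails for $\Delta_P$ on the disconnected union $M_1\cup M_2$ (the eigenvalue $0$ is double). The resolution is the decoupling described above: on each individual component the null space is one-dimensional by Assumption~\ref{assum2}(2), so Theorem~\ref{thm:cr} applies verbatim, and the two single-component convergence statements can be combined through a resolvent/Davis--Kahan argument using the exponentially small coupling $\|T_h^{\mathrm{off}}\|$. Because $\exp(-c_d^2/h^2)$ decays super-polynomially in $1/h$, it is automatically dominated by $1/(\sqrt{n}h^{4d+3})$ under the stated scaling, so the combined bound retains the rate of Theorem~\ref{thm:cr} and the remainder of the argument (row normalization plus $K$-means stability) is routine.
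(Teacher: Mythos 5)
Your proposal is correct and follows essentially the same route as the paper: block-decompose the operator across the two components, bound the off-diagonal coupling by $\exp(-c_d^2/h^2)$ using the separation $c_d$, apply Theorem~\ref{thm:cr} componentwise (where the relevant eigenvalue is simple), combine via Davis--Kahan, and finish with the $K$-means stability result of~\cite{Nicolas15}. The only point where the paper is slightly more careful is the mismatch between the degree function computed on all of $M$ and the degree function computed within a single component (the $E_i$ terms in the paper's proof), which your restriction of the integral to one component quietly ignores; that discrepancy is itself exponentially small in $1/h^2$ and is absorbed exactly like your off-block term.
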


\section{Proof of Theorem~\ref{thm:cr}: The Noise-Free Case}\label{sec:embed}

In this section, we fix one manifold $(M, P)$ with a probability measure on it. Let $\Delta_P$ be the weighted Laplacian operator on $M$ with the discrete spectrum $0=\lambda_1 < \lambda_2 < \cdots$. Fixing $i$, we assume $\lambda_i$ is a simple eigenvalue with an eigenfunction $u_i$. Let $\lambda_h$ be the $i$-th largest eigenvalue of $T_h$ with an eigenfunction $u_h$. Let $\lambda_{n,h}$ be the corresponding eigenvalue of $T_{n,h}$ and $u_{n,h}$ be its eigenfunction. WLOG, we assume $\|u_h\| = \|u_{n,h}\|= 1$.

The proof of Theorem~\ref{thm:cr} contains four steps. The first step is to bound the error of empirical expectations over the function set $\FF_h = \mathcal{K}_h \cup (u_h \HH_h) \cup (\HH_h\HH_h)$. The second step is to bound relevant operators by this empirical expectation error bound. The third step is to bound the error of eigenfunctions from the bounds of these operators. The last step is to bound the eigenvalues from the bound of eigenfunctions.

\subsection{Step I: Error Bound of The Empirical Expectations}

We establish an uniform upperbound of the error between the expectation and the empirical expectation over the set of functions, $\FF_h = \mathcal{K}_h \cup (u_h \HH_h) \cup (\HH_h\HH_h)$. To begin with, we state a technical lemma about the covering number of $\mathcal{K}_h$.

\begin{lemma}\label{lm:cn}
The covering number $N(\mathcal{K}_h, \epsilon, \|\cdot\|) \leq  \left( \frac{24\sqrt{2d}D_M}{\epsilon h^2} \right)^{2d}$.
\end{lemma}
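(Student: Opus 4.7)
My plan to prove Lemma~\ref{lm:cn} is to proceed in three steps: establish a Lipschitz bound on the parameterization $x \mapsto k_h(x,\cdot)$, reduce the $L_\infty$-covering of $\mathcal{K}_h$ to a Euclidean covering of $M$, and then control the latter using the $d$-dimensional manifold structure.

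For the Lipschitz estimate, I would start from the elementary inequality $|e^{-a}-e^{-b}|\leq |a-b|$ for $a,b\geq 0$, applied with $a=\|x-y\|^2/h^2$ and $b=\|x'-y\|^2/h^2$. Factoring $\|x-y\|^2-\|x'-y\|^2 = (\|x-y\|+\|x'-y\|)(\|x-y\|-\|x'-y\|)$, the first factor is at most $2D_M$ (since the Euclidean distance is dominated by the intrinsic distance on $M$, whose diameter is $D_M$) and the second at most $\|x-x'\|$ by the reverse triangle inequality. Taking $\sup_y$ yields $\|k_h(x,\cdot)-k_h(x',\cdot)\|_\infty \leq (2D_M/h^2)\,\|x-x'\|$. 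Consequently, any Euclidean $\epsilon_0$-net of $M$ with $\epsilon_0 = \epsilon h^2/(2D_M)$ pulls back under the parameterization to an $\epsilon$-net of $\mathcal{K}_h$ in $L_\infty$, and hence $N(\mathcal{K}_h,\epsilon,\|\cdot\|) \leq N(M,\epsilon h^2/(2D_M),\|\cdot\|)$.

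For the third step, I would bound the Euclidean covering number of $M$ at scale $r$ by a quantity of the form $(c\sqrt{d}\,D_M/r)^{2d}$. Since the intrinsic metric on $M$ dominates the restriction of the Euclidean one, it suffices to cover $M$ intrinsically; parameterizing patches of $M$ by $d$-dimensional coordinate charts and covering each chart's image in $\R^d$ by a cubic grid of spacing $r/\sqrt{d}$ (so that each cell has diameter at most $r$) yields the right scaling, after which the local covers are pieced together to control the global count. Substituting $r = \epsilon h^2/(2D_M)$ then recovers the stated bound. I expect this manifold-covering step to be the main obstacle: pinning down the explicit constants $24$ and $\sqrt{2d}$, and in particular justifying the (slightly loose) exponent $2d$, requires either careful bookkeeping across charts or an analogous volume-comparison estimate on $M$. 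The Lipschitz bound and the reduction to covering $M$ are, by contrast, essentially immediate.
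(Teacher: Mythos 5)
Your plan is structurally the same as the paper's proof: a Lipschitz bound on $x\mapsto k_h(x,\cdot)$ in the $L_\infty$ norm, followed by pulling back a net of $M$ and invoking a covering bound for $d$-dimensional manifolds. Two points of comparison are worth noting. First, the step you flag as the main obstacle is not proved in the paper at all: the bound $N(M,\epsilon_1,\dist_{\R^D})\leq N(M,\epsilon_1,\dist_M)\leq \bigl(6\sqrt{2d}\,D_M/\epsilon_1\bigr)^{2d}$ is simply quoted from Lemma~2.4 of the reference [MRBM15], so no chart-by-chart bookkeeping is needed; your instinct that this is where the constants $6$, $\sqrt{2d}$ and the exponent $2d$ come from is right, but they are imported, not derived. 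Second, your Lipschitz constant differs from the paper's in a way that matters for the stated constant. The paper bounds the gradient directly: $\|\nabla_x k_h(x,y)\| = \tfrac{2\|x-y\|}{h^2}e^{-\|x-y\|^2/h^2}$, and since $t\,e^{-t^2/h^2}$ is maximized at $t=h/\sqrt{2}$, this is at most $\sqrt{2/e}/h\leq 4/h^2$ for $h$ in the relevant range --- the Gaussian's own decay kills the $\|x-y\|$ factor, so no diameter enters. Your factorization $\|x-y\|^2-\|x'-y\|^2=(\|x-y\|+\|x'-y\|)(\|x-y\|-\|x'-y\|)$ instead pays a factor $2D_M$, so with $\epsilon_0=\epsilon h^2/(2D_M)$ you would land on $\bigl(12\sqrt{2d}\,D_M^2/(\epsilon h^2)\bigr)^{2d}$, which has $D_M^2$ rather than $D_M$ in the base and does not literally recover the lemma's constant (it is weaker whenever $D_M>2$). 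This is harmless for everything downstream, where only the $h^{-2}$ scaling and the exponent $2d$ matter, but if you want the constant as stated you should bound the gradient of the Gaussian rather than the exponent difference.
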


\begin{proof}
We derive the covering number of $\mathcal{K}_h$ from that of $M$. Lemma~2.4 in~\cite{MRBM15} states that
$$
N(M, \epsilon_1, \dist_{\R^D}) \leq N(M, \epsilon_1, \dist_M) \leq \left( \frac{6\sqrt{2d}D_{M}}{\epsilon_1} \right)^{2d}.
$$
We fix an $\epsilon_1$-net $\{x_i\}_{i=1}^m$ of $M$. For any $x\in M$, there exists $x_i$ s.t. $\dist_{\R^D}(x, x_i) \leq \epsilon_1$. Then
\begin{equation}
\begin{aligned}
\displaystyle \max_{y\in M}|k_h(x,y) - k_h(x_i,y)| &\leq \max_{y\in M}\max_{t_0\in [0,1]} |(k_h)'_x(x_i +t_0 (x-x_i),y)\cdot (x-x_i) | \\
&\leq \max_{y\in M}\max_{t_0\in [0,1]} \|(k_h)'_x(x_i +t_0 (x-x_i),y)\| \epsilon_1 \leq \frac{4\epsilon_1}{h^2}.
\end{aligned}
\end{equation}
If we take $\epsilon_1 = \epsilon h^2/4$, then $\{k_h(x_i, \cdot)\}_{i=1}^m$ is an $\epsilon$-net of $\mathcal{K}_h$. Thus,
$$
N(\mathcal{K}_h, \epsilon, \|\cdot\|) \leq m \leq \left( \frac{24\sqrt{2d} D_M}{\epsilon h^2} \right)^{2d}.
$$

\end{proof}

Lemma~\ref{lm:empbd} states a uniform upperbound of the error between the empirical expectation and the expection over the function set $\FF_h$.
\begin{lemma}\label{lm:empbd}
Let $(M, \dist_M, P)$ be a probability space with a metric, $\FF_h$ be the class of real-valued functions defined above with $h\leq \sqrt{24\sqrt{2d} D_M}$. Let $(X_n)_{n \in \N}$ be a sequence of i.i.d.~random variables drawn according to $P$, and $(P_n)_{n\in\N}$ the corresponding empirical distributions. Then there exists some constant $C_1>0$ such that, for all $n\in\N$ with probability at least $1-\delta$,
\begin{equation}\label{eq:empiricalbound}
\sup_{f\in \FF_h} |P_n f -P f| \leq \frac{C_1}{\sqrt{n}h}.
\end{equation}
\end{lemma}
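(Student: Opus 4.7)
The plan is to prove this uniform empirical process bound by the standard recipe: apply a concentration inequality pointwise on an $\epsilon$-net of $\FF_h$, then extend to the full class by a sup-norm continuity argument.

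First I would extend the covering-number estimate of Lemma~\ref{lm:cn} from $\mathcal{K}_h$ to the two normalized sub-classes. Under Assumption~\ref{assum1}, the density lower bound $c_p$ combined with the local volume growth of the $d$-dimensional manifold gives $d_h(x)\gtrsim c_p h^d$, so the weight $(d_h(x)d_h(\cdot))^{-1/2}$ is uniformly bounded and Lipschitz in $x$, with constant polynomial in $1/h$. Composing with the Lipschitz bound on $k_h$ already used in the proof of Lemma~\ref{lm:cn}, one obtains a sup-norm cover of $\HH_h$ (and hence of $u_h\HH_h$, since multiplying by the fixed function $u_h$ of sup-norm $1$ does not enlarge the cover) indexed by an $\epsilon'$-net of $M$, and a cover of $\HH_h\HH_h$ indexed by an $\epsilon'$-net of $M\times M$, where $\epsilon' = \epsilon\cdot\mathrm{poly}(h)$. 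The union of these three nets forms $\FF_h^\epsilon$, with $\log|\FF_h^\epsilon|$ of order $d\log(1/(\epsilon h))$.

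Second, on each $f\in\FF_h^\epsilon$ I would apply Bernstein's inequality to $P_n f - Pf$. The essential sharpening over a naive Hoeffding bound is a variance estimate exploiting the local support of $k_h$: direct integration gives $\var(f)\lesssim h^d\|f\|_\infty^2$ for $f\in\mathcal{K}_h\cup u_h\HH_h$, together with an analogous bound for $\HH_h\HH_h$ that uses the double localization of $k_h(x,\cdot)k_h(y,\cdot)$ near $x=y$. A union bound over $\FF_h^\epsilon$ then yields a deviation on the net of order $\|f\|_\infty\sqrt{h^d\log(|\FF_h^\epsilon|/\delta)/n}$. Lifting from the net to all of $\FF_h$ by approximating $f$ by the nearest $f'\in\FF_h^\epsilon$ and using $|P_n f-Pf|\leq |P_n f'-Pf'|+2\epsilon$, then choosing $\epsilon$ of order $1/(h\sqrt n)$ so that the net-approximation error matches the concentration term, should collapse the estimate to the claimed $C_1/(\sqrt n\,h)$, with all dimension-dependent exponents absorbed into $C_1$.

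The delicate point, and the main obstacle, is arriving precisely at the exponent $h^{-1}$ rather than the $h^{-d/2}$ or $h^{-d}$ that a less careful analysis would produce. The crude envelope $\|f\|_\infty\lesssim h^{-d}$ (or $h^{-2d}$ for $\HH_h\HH_h$) is much too weak, and one must combine (i) Bernstein in place of Hoeffding together with the $h^d$-scale variance bound, (ii) the merely logarithmic cost of the union bound afforded by the polynomial covering number, and (iii) for $\HH_h\HH_h$, a stratification of the parameter set $\{(x,y)\}$ by $\|x-y\|$, exploiting the exponentially decaying factor $\exp(-\|x-y\|^2/h^2)$ in the variance to beat the crude on-diagonal bound uniformly. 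Verifying that the parallel treatment of the three subclasses gives a single uniform $1/(\sqrt n\,h)$ rate is where the bookkeeping is heaviest.
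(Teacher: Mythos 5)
Your route (a finite $\epsilon$-net of $\FF_h$, Bernstein's inequality on each net point with a localized variance bound, and a union bound) is genuinely different from the paper's. The paper quotes the symmetrization/Dudley entropy-integral bound of Theorem~19 and Proposition~20 of von Luxburg et al.: it bounds $\sup_{f\in\FF_h}|P_nf-Pf|$ by $\frac{c_1}{\sqrt n}\int\sqrt{\ln N(\FF_h,\epsilon,L_2(P_n))}\,d\epsilon$ plus a $\sqrt{\log(2/\delta)/(2n)}$ term, reduces $N(\FF_h,\epsilon,\|\cdot\|)$ to $3N(\mathcal{K}_h,q\epsilon,\|\cdot\|)$ with $q\ge 1/2$, truncates the integral at a constant diameter $2/q$, and extracts the $1/h$ solely from $\sqrt{\ln(1/c_2)}\lesssim 1/h$ with $c_2\asymp h^2$ from Lemma~\ref{lm:cn}. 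No variance or envelope analysis appears there at all.

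The gap in your proposal sits exactly at the point you yourself flag as ``the main obstacle,'' and it is not closed. Your intermediate estimate --- deviation of order $\|f\|_\infty\sqrt{h^d\log(|\FF_h^\epsilon|/\delta)/n}$ on the net --- gives $h^{-d/2}\sqrt{\log(\cdot)}/\sqrt n$ on $u_h\HH_h$ and $h^{-3d/2}\sqrt{\log(\cdot)}/\sqrt n$ on $\HH_h\HH_h$, and none of your items (i)--(iii) bridges the remaining distance to $h^{-1}/\sqrt n$; you end by saying the bookkeeping ``should collapse'' to the claimed rate with the $h$-exponents ``absorbed into $C_1$,'' but $C_1$ must be independent of $h$, so a surplus power of $h^{-1}$ cannot be absorbed. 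Moreover the obstruction is not an artifact of loose bookkeeping: for the on-diagonal element $f(\cdot)=k_h(x,\cdot)^2\,d_h(x)^{-1}d_h(\cdot)^{-1}\in\HH_h\HH_h$ one has $Pf^2\asymp h^{-4d}\cdot h^{d}$ while $(Pf)^2\asymp h^{-2d}$, hence $\var(f)\asymp h^{-3d}$, so the fluctuation of $P_nf-Pf$ for that single function is already of order $h^{-3d/2}/\sqrt n$; your proposed stratification by $\|x-y\|$ exploits the factor $\exp(-\|x-y\|^2/h^2)$ and therefore helps only off the diagonal. Carried out honestly, your scheme terminates at a bound of order $1/(\sqrt n\,h^{3d/2})$ up to logarithms, not $1/(\sqrt n\,h)$. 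The paper's argument reaches the stated rate only because its entropy integral is cut off at an $O(1)$ diameter, i.e., it treats the normalized classes as uniformly bounded, whereas your analysis correctly records envelopes of size $h^{-d}$ and $h^{-2d}$ (a consequence of $d_h\asymp h^d$ for the Gaussian kernel). To complete a proof along your lines you would need either a genuinely $O(1)$ envelope for $u_h\HH_h$ and $\HH_h\HH_h$ --- which the bandwidth-$h$ Gaussian kernel does not supply --- or to accept the weaker exponent and propagate it through the rest of the paper.
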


\begin{proof}

We note the following entropy bound from Theorem~19 and Proposition~20 in~\cite{MR2396807}, for a constant $c_1>0$,
\begin{equation}\label{lm:empbd:eq1}
\sup_{f\in \FF_h} |P_n f -P f| \leq \frac{c_1}{\sqrt{n}} \int_0^{\infty} \sqrt{\ln N(\FF_h, \epsilon, L_2(P_n))} d\epsilon + \sqrt{\frac{1}{2n}\log \frac{2}{\delta}}.
\end{equation}

Let $l = \min_{x,y\in M} k_h(x, y)$, $s=\frac{\|k_h\|+2\sqrt{l\|k_h\|}}{2l^2}$ and $q = \min\{1, \|u_h\|s, \frac{\|k\|}{l}s\}$. From Proposition~20 of~\cite{MR2396807}, we know that 
\begin{equation}\label{lm:empbd:eq2}
N(\FF_h, \epsilon, \|\cdot\|) \leq 3 N(\mathcal{K}_h, q\epsilon, \|\cdot\|),
\end{equation}
Moreover, Lemma~\ref{lm:cn} implies that when $\epsilon \leq c_2:=\frac{h^2}{24\sqrt{2d} D_M}$ 
$$
\ln N(\mathcal{K}_h, \epsilon, \|\cdot\|) \leq 4d\ln\frac{1}{\epsilon}.
$$
Then we have 
\begin{equation}\label{lm:empbd:eq3}
\begin{aligned}
\int_0^{\infty} &\sqrt{\ln N(\FF_h, \epsilon, L_2(P_n))} d\epsilon \\
& \leq \int_0^{2/q} \sqrt{3\ln N(\mathcal{K}_h, q\epsilon, \|\cdot\|)} d\epsilon \\
& \leq \frac{\sqrt{12d}}{q} \int_0^{c_2} \left(\ln \frac{1}{\epsilon}\right)^{1/2} d\epsilon + \frac{1}{q}\int_{c_2}^{2}\sqrt{3\ln N(\mathcal{K}_h, \epsilon, \|\cdot\|)} d\epsilon \\
& \leq \frac{2\sqrt{12d}}{q} c_2\left( \ln \frac{1}{c_2} \right)^{1/2} +\frac{(2-c_2)\sqrt{12d}}{q}\left( \ln \frac{1}{c_2} \right)^{1/2} \\
& \leq \frac{c_3}{h}\quad \forall h \leq \sqrt{24\sqrt{2d} D_M}.
\end{aligned}
\end{equation}

The first inequality follows from~\eqref{lm:empbd:eq2} and the fact that $\mathcal{K}_h$ has diameter $2$ under $\|\cdot\|$. The third inequality can be easily deduced from integration by part and $c_4$ depends only on $D$. The fourth inequality is obtained by plugging up the expression of $c_2$ and noting that $\|u_h\| = 1$, $c_2\leq 1$ and $q \geq 1/2$. By taking $C_1=c_1 c_3 + \sqrt{\frac{1}{2}\log \frac{2}{\delta}}$, \eqref{eq:empiricalbound} follows readily from~\eqref{lm:empbd:eq1} and~\eqref{lm:empbd:eq3}.

\end{proof}


\subsection{Step II: Bounds of Relevant Operators}

We first provide a lower bound of the degree functions $d_h(x)$ and $d_{n,h}(x)$ and upper bounds of $\|T_{n,h}\|$ and $\|T_h\|$. 

\begin{lemma}\label{lm:dg}
There are constants $C_2, C_{\delta}>0$ such that with probability at least $1-\delta$
\begin{equation}\label{lm:dg1}
\min_{x\in M} \min\{d_h(x), d_{n,h}(x)\} \geq C_{\delta} h^{d}, \|T_{n,h}\|\leq \frac{1}{C_{\delta}h^d}, \|T_h\| \leq \frac{1}{C_{\delta}h^d}, \|\bar{T}_{n,h}\|\leq \frac{1}{C_{\delta}h^d}
\end{equation}
for $\sqrt{n}h^{d+1}\geq C_2$.
\end{lemma}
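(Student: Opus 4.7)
The plan has three stages: a deterministic lower bound on $d_h(x)$ from manifold geometry, transferring it to $d_{n,h}(x)$ via Lemma~\ref{lm:empbd}, and then converting the degree bounds into the three operator-norm bounds.

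First I would show $d_h(x) \geq c\, h^d$ uniformly in $x \in M$, for $h$ below a geometric threshold $h_1$. Since $M$ is a compact embedded submanifold, its sectional curvature is bounded and its injectivity radius $r_0$ is positive; hence on a geodesic ball of radius $r_0$ around any point the Riemannian volume form is comparable (with universal constants) to the flat one in exponential coordinates and the extrinsic distance $\|x-y\|_{\R^D}$ is comparable to $\dist_M(x,y)$. Restricting the defining integral for $d_h$ to $B_M(x,h)$ and using the density lower bound $p \geq c_p$ gives
\[ d_h(x) \;\geq\; c_p \int_{B_M(x,h)} e^{-\|x-y\|^2/h^2}\, dV(y) \;\geq\; c_p\, e^{-1}\, \vol(B_M(x,h)) \;\geq\; c\, h^d, \]
with $c$ depending on $c_p$ and the curvature of $M$. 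The uniformity of $c$ in $x$ is the main obstacle; compactness of $M$ is precisely what lets all of the geometric constants (injectivity radius, volume-comparison factor) be chosen independently of the base point, and this is why the constants $C_2, C_\delta$ in the statement depend on curvature.

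Next, since $\mathcal{K}_h \subset \FF_h$, Lemma~\ref{lm:empbd} gives $\sup_{x\in M} |d_{n,h}(x) - d_h(x)| \leq C_1/(\sqrt{n}\,h)$ on an event of probability at least $1-\delta$. Combined with the pointwise bound just obtained,
\[ d_{n,h}(x) \;\geq\; c\, h^d - \frac{C_1}{\sqrt{n}\, h} \;\geq\; \tfrac{c}{2}\, h^d \]
as soon as $\sqrt{n}\, h^{d+1} \geq 2C_1/c$. Setting $C_2 := 2C_1/c$ and $C_\delta := c/2$ gives the claimed degree lower bound simultaneously for $d_h$ and $d_{n,h}$ (taking $h_0 = \min\{h_1, \sqrt{24\sqrt{2d}D_M}\}$ so that Lemma~\ref{lm:empbd} applies as well).

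Finally, I would read off the three operator norms directly from the degree bound. For any $f$ with $\|f\| \leq 1$, using $k_h \leq 1$ and $d_h(y) \geq C_\delta h^d$,
\[ |T_h f(x)| \;\leq\; (C_\delta h^d)^{-1/2}\, d_h(x)^{-1/2} \int_M k_h(x,y)\, dP(y) \;=\; (C_\delta h^d)^{-1/2}\, d_h(x)^{1/2}, \]
which is at most $(C_\delta h^d)^{-1}$ since $d_h \leq 1$. The identical calculation with $P$ replaced by the empirical measure $P_n$ and $d_h$ replaced by $d_{n,h}$ bounds $\|T_{n,h}\|$; for $\bar T_{n,h}$ one uses the empirical average but the \emph{continuous} degree normalization, and the same pair of inequalities (now applied only to $d_h$) delivers the bound. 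All three operator estimates thus reduce mechanically to the degree bound established in the second stage, which in turn rests on the geometric lower bound from the first.
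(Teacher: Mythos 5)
Your proposal is correct and follows essentially the same route as the paper: a deterministic geometric lower bound $d_h(x)\geq C_p h^d$ from the density lower bound and local Euclidean volume comparison, transfer to $d_{n,h}$ via the empirical-process bound of Lemma~\ref{lm:empbd} with the same choice $C_2=2C_1/c$, $C_\delta=c/2$, and then the operator norms read off from the degree bounds and $\|k_h\|\leq 1$. Your handling of the operator norms is marginally sharper (cancelling one factor of $d_h(x)^{-1/2}$ against the integral), but this is a cosmetic refinement rather than a different argument.
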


\begin{proof}
Recall that the density $p(x)$ of $P$ has a lower bound $c_p$. For sufficiently small $h$,
\begin{equation}\label{lm:dg2}
\begin{aligned}
d_h(x) = \int_M k_h(x,y) p(y) dy \geq c_p \int_M k_h(x,y) dy \geq c_p\int_{M\cap B(x, h)} k_h(x,y) dy \\
\geq c_p\vol(\R^d\cap B(x, h))(1+O(h^2))\min_{y\in M\cap B(x, h)}k_h(x,y) \geq C_p h^{d}.
\end{aligned}
\end{equation}
The third inequality in~\eqref{lm:dg2} follows from the fact $M$ can be locally approximated by $\R^d$ up to order 2.

Since $k_h(x,\cdot) \in \FF_h$, inequality~\eqref{eq:empiricalbound} implies that
$$
\sup_{x\in M} |d_{n,h}(x) - d_h(x)| \leq \frac{C_1}{\sqrt{n}h},
$$
with probability at least $1-\delta$. If we take $C_{\delta} =  C_p - C_1/C_2$, then
\begin{equation}\label{lm:dg3}
d_{n,h}(x) \geq C_p h^{d}-\frac{C_1}{\sqrt{n}h} \geq C_{\delta} h^{d}.
\end{equation}
We note that $C_{\delta}= C_p/2$ when we pick $C_2 = 2C_1/ C_p$. The inequalities for $\|T_{n,h}\|$ and $\|T_h\|$ follows immediately from definition, \eqref{lm:dg3} and $\|k_h\| \leq 1$.
\end{proof}

Next we introduce two lemmas with technical bounds for operators to be used in bounding the eigenfunctions.
\begin{lemma}\label{lm:nT}
Assume the general conditions are satisfied and $u$ is a continuous function. Then the following bounds hold:
\begin{equation}\label{lm:nT1}
\begin{aligned}
\| T_{n,h} - \bar{T}_{n,h} \| &\leq \frac{1}{C_{\delta}^3}h^{-3d} \sup_{f\in \mathcal{K}_h} |P_n f - P f|,\\
\| (\bar{T}_{n,h} - T_h) u \| &\leq \sup_{f\in u\cdot \HH_h} |P_n f - P f|,\\
\| (T_h - \bar{T}_{n,h}) \bar{T}_{n,h} \| &\leq \sup_{f\in \HH_h \cdot \HH_h} |P_n f - P f|.
\end{aligned}
\end{equation}
\end{lemma}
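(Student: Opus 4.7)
The three bounds will be handled separately, but all of them rely on the same trick: write the relevant operator difference so that a factor of $P_n - P$ acts on a function belonging to one of the three classes $\mathcal{K}_h$, $u\HH_h$, $\HH_h\HH_h$, and then control the remaining factor either by $\|f\|_\infty$ or by the degree lower bounds from Lemma~\ref{lm:dg}.

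For the second bound I would just unfold the definitions: for a fixed $x\in M$, the function $g_x(y)=u(y)\,k_h(x,y)(d_h(x)d_h(y))^{-1/2}$ is by construction an element of $u\HH_h$, and $(\bar T_{n,h}u)(x)-(T_hu)(x)=(P_n - P)\,g_x$. Taking the supremum over $x$ yields the claim with no constants. The third bound is essentially the same idea but at the second iteration: expanding the composition $(T_h-\bar T_{n,h})\bar T_{n,h}f(x)$ and exchanging the order of the outer integral/sum in $y$ with the inner empirical sum in $z$ produces, for each $z$ in the sample, an expression of the form $(P-P_n)\,h_{x,z}$ where $h_{x,z}(y)=k_h(x,y)(d_h(x)d_h(y))^{-1/2}\cdot k_h(y,z)(d_h(y)d_h(z))^{-1/2}$ lies in $\HH_h\HH_h$. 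The remaining $n^{-1}\sum_{z\in X}|f(z)|$ factor is bounded by $\|f\|_\infty$, giving the stated inequality.

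The first bound is the only one requiring a real computation, and is where I expect to do most of the work. The two operators $T_{n,h}$ and $\bar T_{n,h}$ act on the same empirical sum but with different normalizations; the difference of the normalizations can be written, using the elementary identity
\[
\frac{1}{\sqrt{AB}}-\frac{1}{\sqrt{CD}}=\frac{CD-AB}{\sqrt{ABCD}\,(\sqrt{AB}+\sqrt{CD})},
\]
with $A=d_{n,h}(x)$, $B=d_{n,h}(y)$, $C=d_h(x)$, $D=d_h(y)$. Two ingredients then close the estimate. First, Lemma~\ref{lm:dg} gives $\sqrt{AB},\sqrt{CD}\geq C_\delta h^d$, which produces the factor $(C_\delta h^d)^{-3}$ in the denominator. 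Second, $|CD-AB|\leq D|C-A|+A|D-B|$, and since the degrees $A,D$ are bounded above by $\|k_h\|_\infty\leq 1$ and $k_h(x,\cdot)\in\mathcal{K}_h$, each difference is at most $\sup_{f\in\mathcal{K}_h}|P_nf-Pf|$. Multiplying through and using once more that $n^{-1}\sum_{y_i}k_h(x,y_i)=d_{n,h}(x)\leq 1$ to absorb the remaining kernel sum, one arrives at the claimed bound (with constant $1/C_\delta^3$ after the factor of two from $|CD-AB|$ is cancelled by the factor of two in $\sqrt{AB}+\sqrt{CD}\geq 2C_\delta h^d$).

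The main obstacle is bookkeeping in the first bound: one has to be careful to route every $k_h$ and every $d_h$, $d_{n,h}$ through the correct upper/lower bound so that no extraneous powers of $h$ appear and the normalizing sum over $X$ is actually of the form $d_{n,h}(x)\leq 1$ rather than something that grows with $n$. The other two bounds are conceptually immediate once one recognizes the correct function class, and require no constants at all.
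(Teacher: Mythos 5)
Your proposal is correct and follows essentially the same route as the paper, which itself only sketches the argument by citing Propositions~12 and~17 of Luxburg et al.: the second bound directly from the definition, the third via Fubini, and the first via the difference-of-normalizations identity combined with the degree lower bound $C_\delta h^d$ from Lemma~\ref{lm:dg} and $\|k_h\|\leq 1$. Your write-up actually supplies the bookkeeping the paper omits, and the constants come out exactly as stated.
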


The above lemma can be proved similarly as in Proposition~17 in~\cite{MR2396807}. Indeed, the first inequality in~\eqref{lm:nT1} follows from Proposition~12 in~\cite{MR2396807} by using  the lower bound on degree functions in Lemma~\ref{lm:dg} and $\|k_h\| \leq 1$. The second inequality is directly from the definition. The third inequality follows from the Fubini's theorem.

\begin{lemma}\label{lm:normT}
There are constants $C_2, C_{\delta}>0$ such that with probability at least $1-\delta$
\begin{equation}\label{lm:normT1}
\|(T_h - T_{n,h}) T_{n,h}\| \leq \left( \frac{3}{C_{\delta}^4}+1 \right)h^{-4d} \sup_{f\in \FF_h} |P_n f -P f|
\end{equation}
for $\sqrt{n}h^{d+1}\geq C_2$.
\end{lemma}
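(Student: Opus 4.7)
The plan is to decompose $T_h - T_{n,h}$ via the auxiliary operator $\bar T_{n,h}$ and then bound each piece using Lemma~\ref{lm:nT} together with the operator-norm bounds from Lemma~\ref{lm:dg}. Concretely, I would start from
\begin{equation*}
(T_h - T_{n,h})T_{n,h} \;=\; (T_h - \bar T_{n,h})T_{n,h} + (\bar T_{n,h} - T_{n,h})T_{n,h},
\end{equation*}
and insert $\bar T_{n,h}$ once more on the right factor of the first term:
\begin{equation*}
(T_h - \bar T_{n,h})T_{n,h} \;=\; (T_h - \bar T_{n,h})(T_{n,h}-\bar T_{n,h}) + (T_h - \bar T_{n,h})\bar T_{n,h}.
\end{equation*}

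For the last summand I would invoke the third bound of Lemma~\ref{lm:nT} directly, giving a bound by $\sup_{f\in\HH_h\HH_h}|P_nf-Pf|$. For the cross term $(T_h - \bar T_{n,h})(T_{n,h}-\bar T_{n,h})$ I would combine the trivial operator-norm estimate $\|T_h - \bar T_{n,h}\|\leq \|T_h\|+\|\bar T_{n,h}\|\leq 2/(C_\delta h^{d})$ from Lemma~\ref{lm:dg} with the first bound of Lemma~\ref{lm:nT}, obtaining $(2/C_\delta^4)h^{-4d}\sup_{f\in\mathcal{K}_h}|P_nf-Pf|$. Similarly, the second piece $(\bar T_{n,h}-T_{n,h})T_{n,h}$ is controlled by $\|T_{n,h}\|\leq 1/(C_\delta h^d)$ and the first inequality of Lemma~\ref{lm:nT}, giving $(1/C_\delta^4)h^{-4d}\sup_{f\in\mathcal{K}_h}|P_nf-Pf|$.

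Summing the three contributions yields
\begin{equation*}
\|(T_h - T_{n,h})T_{n,h}\| \;\leq\; \tfrac{3}{C_\delta^4}h^{-4d}\!\!\sup_{f\in\mathcal{K}_h}\!|P_nf-Pf| \;+\; \sup_{f\in\HH_h\HH_h}\!|P_nf-Pf|.
\end{equation*}
Since $\mathcal{K}_h,\HH_h\HH_h\subset\FF_h$ and $h^{-4d}\geq 1$ for $h\leq h_0$ small (which is guaranteed by the hypothesis $\sqrt{n}h^{d+1}\geq C_2$ combined with $h\leq h_0$ from the ambient statement), both terms can be collected into the claimed single bound $(3/C_\delta^4+1)h^{-4d}\sup_{f\in\FF_h}|P_nf-Pf|$. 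The hypothesis $\sqrt n h^{d+1}\geq C_2$ is needed only so that Lemma~\ref{lm:dg} is available (which provides the operator-norm bounds on $T_{n,h},\bar T_{n,h},T_h$ and the degree lower bound underlying the $C_\delta^{-3}h^{-3d}$ factor in Lemma~\ref{lm:nT}).

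There is no real obstacle here beyond recognising that Lemma~\ref{lm:nT} does not give an operator-norm bound on $T_h-\bar T_{n,h}$ by itself (only when applied to a fixed $u$ or composed with $\bar T_{n,h}$), which forces the introduction of the intermediate term $(T_h-\bar T_{n,h})(T_{n,h}-\bar T_{n,h})$ and the use of the crude triangle-inequality bound on $\|T_h-\bar T_{n,h}\|$. Once the decomposition is in place, the rest is a bookkeeping computation of the three constant factors $1+2=3$ (grouping the $1/C_\delta^4$ and $2/C_\delta^4$) plus the $1$ from the $\HH_h\HH_h$ term.
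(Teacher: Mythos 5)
Your proof is correct and takes essentially the same route as the paper: both insert the intermediate operator $\bar T_{n,h}$, bound the $(T_h-\bar T_{n,h})\bar T_{n,h}$ piece and $\|T_{n,h}-\bar T_{n,h}\|$ via Lemma~\ref{lm:nT}, and use the operator-norm bounds of Lemma~\ref{lm:dg} for the remaining factors. The paper uses a four-term telescoping sum where three terms each pick up one factor of $1/(C_\delta h^d)$, while your three-term split assigns a factor of $2/(C_\delta h^d)$ to the cross term and $1/(C_\delta h^d)$ to the last term, so the two bookkeepings yield the identical constant $3/C_\delta^4+1$.
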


\begin{proof}

\begin{equation}\label{lm:normT2}
\begin{aligned}
\|(T_h - T_{n,h}) T_{n,h}\| &\leq \|T_h\| \|\bar{T}_{n,h} - T_{n,h}\| +\|(T_h - \bar{T}_{n,h}) \bar{T}_{n,h}\| \\
& \quad+ \|\bar{T}_{n,h} \bar{T}_{n,h} - \bar{T}_{n,h} T_{n,h}\| + \| \bar{T}_{n,h} T_{n,h} - T_{n,h} T_{n,h} \| \\
& \leq \frac{3}{C_{\delta}} h^{-d} \| \bar{T}_{n,h} - T_{n,h} \| + \| (T_h - \bar{T}_{n,h}) \bar{T}_{n,h} \| \\
& \leq (\frac{3}{C_{\delta}^4} +1) h^{-4d} \sup_{f\in \FF_h} |P_n f -P f|.
\end{aligned}
\end{equation}

The first inequality in~\eqref{lm:normT2} is from the triangle inequality. The second inequality uses the the upperbounds of operators in Lemma~\ref{lm:dg}. The last inequality follows from Lemma~\ref{lm:nT}.
\end{proof}

Let $\mathrm{Pr}_{u_{n,h}}$ be the projection on $u_{n,h}$. Proposition~18 in~\cite{MR2396807} states that the error of eigenfunctions can be upper bounded by $\| u_h - \mathrm{Pr}_{u_{n,h}} u_h \|$. In the following, we bound this quantity by expressing the projection operator as an integration over the resolvent of $T_{n,h}$. We proceed with two technical lemmas, which are needed to bound the resolvent. 

\begin{lemma}\label{lm:eigen-gap}
There exist positive constants $h_0, C_{\sigma}$ such that if $h\leq h_0$ then $\lvert \lambda_h \rvert \geq \frac{1}{2}+c_0 h^2$ and $\dist(\lambda_h, \sigma(T_h)\backslash \lambda_{h} ) >2C_{\sigma} h^2$. 
\end{lemma}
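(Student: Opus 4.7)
The plan is to combine a pointwise expansion of $T_h$ near the identity with standard perturbation theory for isolated eigenvalues of compact self-adjoint operators. The analytic input is the Gaussian-kernel Taylor expansion classical in the diffusion-maps literature (see~\cite{MR2238670}): for any smooth $f$ on $M$,
\begin{equation*}
T_h f(x) = f(x) - c_k h^2 \Delta_P f(x) + o(h^2),
\end{equation*}
uniformly in $x \in M$, where $c_k > 0$ is a Gaussian normalization constant. Since $T_h$ is a compact self-adjoint operator on $L^2(P)$, this expansion together with Kato/min-max spectral-stability tools forces, for each fixed $j$, the $j$-th largest eigenvalue of $T_h$ to satisfy $\lambda_h^{(j)} = 1 - c_k \lambda_j h^2 + o(h^2)$.

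For the lower bound, note that $\lambda_h = \lambda_h^{(i)} \to 1$ as $h \to 0$. Hence one can pick $h_0$ small enough so that $\lambda_h \geq 3/4$ for all $h \leq h_0$, and for any positive $c_0$ with $c_0 h_0^2 \leq 1/4$ this gives $|\lambda_h| \geq 1/2 + c_0 h^2$.

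For the spectral gap, let $g_i := \min(\lambda_i - \lambda_{i-1}, \lambda_{i+1} - \lambda_i) > 0$ (dropping the left gap if $i = 1$). The eigenvalue expansion yields
\begin{equation*}
\lambda_h^{(i-1)} - \lambda_h^{(i)} = c_k (\lambda_i - \lambda_{i-1}) h^2 + o(h^2), \quad \lambda_h^{(i)} - \lambda_h^{(i+1)} = c_k (\lambda_{i+1} - \lambda_i) h^2 + o(h^2),
\end{equation*}
each of which exceeds $c_k g_i h^2 / 2$ once $h$ is small enough. The remaining eigenvalues of $T_h$ correspond to eigenvalues $\lambda_j$ of $\Delta_P$ with $j \geq i+2$, and are hence separated from $\lambda_h$ by at least $c_k (\lambda_{i+2} - \lambda_i) h^2 / 2 \geq c_k g_i h^2 / 2$ for small $h$. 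Setting $C_\sigma := c_k g_i / 4$ and shrinking $h_0$ as needed gives $\dist(\lambda_h, \sigma(T_h)\setminus\{\lambda_h\}) > 2 C_\sigma h^2$.

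The main obstacle is translating the pointwise expansion into rigorous $o(h^2)$ control on the individual eigenvalues of $T_h$ in the general non-uniform-$P$ setting. Pointwise convergence in this generality appears in~\cite{MR2238670}; upgrading it to the needed spectral statement requires a careful application of Kato's perturbation theory for isolated eigenvalues of compact self-adjoint operators, together with the observation that for fixed $i$ the three eigenvalues $\lambda_h^{(i-1)}, \lambda_h^{(i)}, \lambda_h^{(i+1)}$ cluster in a small neighborhood of $1$ that is isolated from the rest of $\sigma(T_h)$ as soon as $h \leq h_0$.
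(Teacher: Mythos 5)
Your end-game matches the paper's: once one knows that the $j$-th largest eigenvalue of $T_h$ behaves like $1 - c\,\lambda_j h^2 + o(h^2)$ (equivalently, that the $j$-th smallest eigenvalue of $(I-T_h^u)/h^2$ converges to $\lambda_j$), the lower bound $\lvert\lambda_h\rvert \geq \tfrac12 + c_0 h^2$ and the gap $\dist(\lambda_h, \sigma(T_h)\setminus\{\lambda_h\}) > 2C_\sigma h^2$ follow exactly as you describe, and the paper's own deduction is no more detailed than yours. (The paper works with $T_h^u$ and transfers to $T_h$ via conjugacy of the two normalizations; you work with $T_h$ directly, which is fine since it is self-adjoint on $L^2(P)$.)

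The gap is in the step you yourself flag as ``the main obstacle,'' and it is a real one, not a technicality. The pointwise expansion $T_h f = f - c_k h^2 \Delta_P f + o(h^2)$ for smooth $f$ gives only strong (in fact pointwise-on-smooth-functions) convergence of $(I-T_h)/h^2$ to $c_k\Delta_P$, and strong convergence does not imply convergence of spectra. Kato's perturbation theory for isolated eigenvalues requires norm-resolvent or operator-norm control, which is unavailable here: $(I-T_h)/h^2$ is bounded while $\Delta_P$ is unbounded, so there is no operator-norm convergence to perturb around. The min-max principle, applied with test spaces spanned by eigenfunctions of $\Delta_P$, yields only the one-sided bound (an upper bound on the $j$-th smallest eigenvalue of $(I-T_h)/h^2$); the matching lower bound is precisely the hard content and needs uniform control of $T_h$ on non-smooth functions. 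This is why the paper does not derive the eigenvalue asymptotics from the pointwise result of~\cite{MR2238670} but instead cites a dedicated spectral convergence theorem (Theorem~5.5 of~\cite{Amit14}) for the eigenvalues of $(I-T_h^u)/h^2$. To repair your argument you should either cite such a spectral convergence result directly or supply the missing two-sided eigenvalue comparison; as written, ``Kato/min-max spectral-stability tools'' does not close the gap.
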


\begin{proof}

Let $\lambda_h^{u}$ be the $i$-th largest eigenvalue of random walk normalized Laplacian $T_{h}^u$. Theorem~5.5 in~\cite{Amit14} shows that the $i$-th smallest eigenvalue of $\frac{I - T_{h}^u}{h^2}$ converges to $\lambda_i$ for any $i\in \N$. Let $\dist(\lambda_i, \sigma(\Delta_P)\backslash \{\lambda_i\}) >2C_{\sigma}$. Then for some positive constants $h_0>0$,
$$
\lvert \lambda_h^{u} \rvert \geq \frac{1}{2}+C_{\sigma} h^2 \text{ and }\dist(\lambda_h^u, \sigma(T_{h}^u)\backslash \lambda_{h}^{u} ) >2C_{\sigma} h^2, \quad \forall h\leq h_0.
$$
The conclusion for $\lambda_h$ follows from the fact that $T_h$ has exactly the same spectrum as $T_{h}^u$.

\end{proof}

\begin{lemma}\label{lm:bd2}
Let $\lambda$ be an isolated eigenvalue of $T$ and $\Gamma_{r}$ be the circle centered at $\lambda$ with radius $r<\dist(\lambda, \sigma(T)\backslash \{ \lambda \})/2$ in the complex plane $\C$, then 
$$
\max_{z\in \Gamma_{r}} \| R_z(T)\| \leq 1/r.
$$
\end{lemma}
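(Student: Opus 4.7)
The plan is to treat this as a standard resolvent estimate for a self-adjoint (indeed compact, with purely discrete spectrum) operator, which is the relevant context since the operators $T_h,\bar T_{n,h},T_{n,h}$ in this paper all have symmetric kernels and act on $L^2(P)$. So I would first invoke (or state) that $T$ is self-adjoint and compact, and write down its spectral decomposition $T = \sum_{\mu\in\sigma(T)} \mu\, P_\mu$, where the $P_\mu$ are the mutually orthogonal spectral projections onto the eigenspaces.

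Next, for any $z\notin\sigma(T)$, the resolvent admits the explicit functional-calculus expression
\begin{equation*}
R_z(T) \;=\; (T-zI)^{-1} \;=\; \sum_{\mu\in\sigma(T)} \frac{1}{\mu - z}\, P_\mu ,
\end{equation*}
and by orthogonality of the projections this gives
\begin{equation*}
\|R_z(T)\| \;=\; \sup_{\mu\in\sigma(T)} \frac{1}{|z-\mu|} \;=\; \frac{1}{\dist(z,\sigma(T))}.
\end{equation*}
Thus the lemma is reduced to a purely geometric estimate of $\dist(z,\sigma(T))$ for $z\in\Gamma_r$.

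For the geometric step, fix $z\in\Gamma_r$, so $|z-\lambda|=r$. For any other spectral value $\mu\in\sigma(T)\setminus\{\lambda\}$, the triangle inequality together with the assumption $r<\dist(\lambda,\sigma(T)\setminus\{\lambda\})/2$ gives
\begin{equation*}
|z-\mu| \;\ge\; |\lambda-\mu| - |z-\lambda| \;\ge\; 2r - r \;=\; r.
\end{equation*}
Consequently $\dist(z,\sigma(T)) = \min(r, \inf_{\mu\ne\lambda}|z-\mu|) = r$, and combining with the spectral identity above yields $\|R_z(T)\| = 1/r$, in particular $\le 1/r$, uniformly on $\Gamma_r$.

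I do not anticipate a real obstacle here: the only subtlety is being explicit about self-adjointness, which is needed for the norm equality $\|R_z(T)\|=1/\dist(z,\sigma(T))$ (for a general bounded operator one only has the inequality in the wrong direction). Since the lemma will be applied to $T_{n,h}$, whose integral kernel $k_h(x,y)(d_{n,h}(x)d_{n,h}(y))^{-1/2}$ is symmetric in $x,y$, and similarly for $T_h$, this hypothesis is automatic in the intended applications and can either be stated as a standing assumption or noted in the one-line setup of the proof.
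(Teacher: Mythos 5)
Your proposal is correct and follows essentially the same route as the paper: both reduce $\|R_z(T)\|$ to the spectral radius of the resolvent (which requires the self-adjointness/normality you rightly make explicit and the paper leaves implicit), and then bound $\dist(z,\sigma(T))$ from below by $r$ via the separation hypothesis. Your version is in fact slightly more careful, since the paper asserts $r=\dist(z,\sigma(T))$ without the triangle-inequality step you supply.
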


\begin{proof}
Fix any point $z \in \Gamma_{r}$. Let $\lambda_{r}$ be the eigenvalue of $R_z(T)$ with the largest magnitude. Then $\|R_z(T)\| = |\lambda_{r}|$. We show that $|\lambda_{r}| \leq 1/r$. Let $v$ be a vector such that $R_z(T) v = \lambda_{r} v$. Then
$$
T v = (z -\frac{1}{\lambda_{r}}) v.
$$
This means that $r = \dist(z, \sigma(T)) \leq 1/|\lambda_{r}|$. Thus, $\|R_z(T)\| = |\lambda_{r}| \leq 1/r$.
\end{proof}

With the above two lemmas, we now bound $\|u_h - \mathrm{Pr}_{u_{n,h}} u_h\| $ by an inequality on resolvents from Theorem~1 in~\cite{MR0220105}.

\begin{lemma}\label{lm:numerical}
With probability at least $1-\delta$, if $\sqrt{n}h^{4d+3} \geq \max \left( \left( \frac{3}{C_{\delta}^4}+1 \right) \frac{4C_1}{C_{\sigma}}, C_2 \right)$, $h\leq h_0$, then 
\begin{equation}\label{lm:numerical1}
\|u_h - \mathrm{Pr}_{u_{n,h}} u_h\| \leq \frac{8\pi}{C_{\sigma} h^2} \left( \|(T_{n,h} - T_h) u_h \| + \|(T_h - T_{n,h}) T_{n,h}\| \right).
\end{equation}
\end{lemma}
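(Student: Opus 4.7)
The plan is to apply the Kato--Dunford perturbation calculus via Cauchy integrals of resolvents. Take $\Gamma\subset\C$ to be the circle of radius $r = C_\sigma h^2$ centered at $\lambda_h$. Lemma~\ref{lm:eigen-gap} ensures that $\lambda_h$ is the only eigenvalue of $T_h$ inside $\Gamma$ and that $\dist(\Gamma,\sigma(T_h)\setminus\{\lambda_h\})\ge r$, so $\mathrm{Pr}_{u_h}=-\frac{1}{2\pi i}\oint_\Gamma R_z(T_h)\,dz$ and Lemma~\ref{lm:bd2} delivers $\sup_{z\in\Gamma}\|R_z(T_h)\|\le 1/r$.

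The crucial preliminary step is the analogue for $T_{n,h}$: $\lambda_{n,h}$ should be the only eigenvalue of $T_{n,h}$ inside $\Gamma$, and $\sup_{z\in\Gamma}\|R_z(T_{n,h})\|$ should be of order $1/r$. This is where the quantitative hypothesis $\sqrt{n}h^{4d+3}\ge(3/C_\delta^4+1)(4C_1/C_\sigma)$ is consumed: combined with Lemma~\ref{lm:normT} and Lemma~\ref{lm:empbd} it forces $\|(T_h-T_{n,h})T_{n,h}\|\le C_\sigma h^2/4$. The identity $T_h u_{n,h}-\lambda_{n,h}u_{n,h}=(T_h-T_{n,h})u_{n,h}=\lambda_{n,h}^{-1}(T_h-T_{n,h})T_{n,h}u_{n,h}$ together with a bootstrap lower bound on $|\lambda_{n,h}|$ (obtained from Lemma~\ref{lm:eigen-gap} after first confirming $\lambda_{n,h}$ is near $\lambda_h$) turns $(u_{n,h},\lambda_{n,h})$ into an approximate eigenpair of $T_h$ with residual of order $\|(T_h-T_{n,h})T_{n,h}\|$. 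Standard spectral perturbation for self-adjoint operators, combined with the eigen-gap of $T_h$, then places $\lambda_{n,h}$ within $r/2$ of $\lambda_h$, and a second application of Lemma~\ref{lm:bd2} yields $\sup_{z\in\Gamma}\|R_z(T_{n,h})\|\le 2/r$.

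Having controlled both resolvents on $\Gamma$, I would expand
\[
u_h-\mathrm{Pr}_{u_{n,h}}u_h \;=\; -\frac{1}{2\pi i}\oint_\Gamma \left[R_z(T_h)-R_z(T_{n,h})\right]u_h\,dz
\]
using the second resolvent identity $R_z(T_h)-R_z(T_{n,h})=R_z(T_{n,h})(T_{n,h}-T_h)R_z(T_h)$ together with $R_z(T_h)u_h=u_h/(\lambda_h-z)$, producing an integrand proportional to $R_z(T_{n,h})(T_{n,h}-T_h)u_h/(\lambda_h-z)$. The length-times-sup bound yields a contribution of order $\|(T_{n,h}-T_h)u_h\|/r$, the first term in \eqref{lm:numerical1}. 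A second application of the identity in the form $R_z(T_{n,h})=R_z(T_h)+R_z(T_h)(T_h-T_{n,h})R_z(T_{n,h})$ peels off a remainder that the naive estimate would bound by $\|T_h-T_{n,h}\|$. Since that operator norm is only $O(h^{-d})$ by Lemma~\ref{lm:dg}, this is too crude; instead one uses $u_h=\lambda_h^{-1}T_h u_h$ and the splitting $(T_h-T_{n,h})T_h=(T_h-T_{n,h})T_{n,h}-(T_h-T_{n,h})^2$ to trade that factor for $\|(T_h-T_{n,h})T_{n,h}\|$ (Lemma~\ref{lm:normT}), with the quadratic piece absorbed into the linear one. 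Aggregating the two contributions and tracking the length $2\pi r$ of $\Gamma$ against the resolvent bounds yields the prefactor $8\pi/(C_\sigma h^2)$.

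The main obstacle is precisely this trade-off in the remainder estimate: the direct bound $\|R_z(T_h)\|\cdot\|T_{n,h}-T_h\|\cdot\|R_z(T_{n,h})\|$ does not vanish as $n\to\infty$, because Lemma~\ref{lm:dg} only gives $\|T_{n,h}-T_h\|=O(h^{-d})$. Exploiting that $u_h$ is an eigenvector of $T_h$ (and hence can be rewritten as $T_h u_h/\lambda_h$) is what lets us replace one factor of $\|T_h-T_{n,h}\|$ by $\|(T_h-T_{n,h})T_{n,h}\|$, and this algebraic substitution is the reason both terms appear on the right-hand side of \eqref{lm:numerical1}.
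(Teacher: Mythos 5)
Your proposal follows essentially the same route as the paper: represent the spectral projections as Cauchy integrals of resolvents over the circle of radius $C_\sigma h^2$ centered at $\lambda_h$, use Lemmas~\ref{lm:eigen-gap} and~\ref{lm:bd2} to get $\|R_z(T_h)\|\le 1/(C_\sigma h^2)$ and $|z|\ge 1/2$ on the contour, and spend the hypothesis $\sqrt{n}h^{4d+3}\ge\left(\frac{3}{C_\delta^4}+1\right)\frac{4C_1}{C_\sigma}$ (via Lemmas~\ref{lm:empbd} and~\ref{lm:normT}) on forcing $\|(T_h-T_{n,h})T_{n,h}\|\le C_\sigma h^2/4$. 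The only real difference is that the paper imports the key resolvent-difference estimate --- which controls $\|R_z(T_h)u-R_z(T_{n,h})u\|$ by $\|(T_h-T_{n,h})u\|$ and the product $\|(T_h-T_{n,h})T_{n,h}\|$ instead of the non-vanishing $\|T_h-T_{n,h}\|$ --- as a black box from Theorem~1 of \cite{MR0220105}, whereas you sketch its derivation from the second resolvent identity (your sketch of that derivation, e.g.\ absorbing the quadratic term $(T_h-T_{n,h})^2$, is loose but repairable and is exactly what the cited theorem packages).
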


\begin{proof}

Applying to $\lambda_h \in \sigma(T_h)$ and $\Gamma_{r}$ with $r = C_{\sigma} h^2$, Lemma~\ref{lm:bd2} implies that
\begin{equation}\label{lm:numerical2}
\dist(\Gamma_{r}, 0) \geq \frac{1}{2} \text{ and } \max_{z\in \Gamma_{r}} \| R_z( T_h )\| \leq \frac{1}{C_{\sigma} h^2}.
\end{equation}

From Theorem~1 in~\cite{MR0220105}, we have that for $z\in \Gamma_{r}$, $u\in C(M)$ with $\|u\|=1$,
\begin{equation}\label{lm:numerical3}
\begin{aligned}
\displaystyle \|R_z(T_h) u - R_z(T_{n,h}) u\| &\leq \|R_z(T_h)\| \frac{\|(T_h - T_{n,h}) u\| + \|R_z(T_h) u\| \|(T_h - T_{n,h}) T_{n,h}\|}{\lvert z \rvert - \| R_z(T_h) \| \| (T_h - T_{n,h}) T_{n,h} \|} \\
& \leq \frac{2\|R_z(T_h)\|}{\lvert z \rvert}\left(\|(T_h - T_{n,h}) u\| + \|R_z(T_h)\|\|(T_h - T_{n,h}) T_{n,h}\|\right) \\
& \leq \frac{4}{C_{\sigma}^2 h^4}\left(\|(T_h - T_{n,h}) u\| + \|(T_h - T_{n,h}) T_{n,h}\|\right),
\end{aligned}
\end{equation}
where the second inequality assumes that
$$
\|(T_h - T_{n,h}) T_{n,h}\| \leq \frac{\lvert z \rvert}{2\|R_z(T_h)\|},
$$
which holds when
\begin{equation}\label{lm:numerical4}
\|(T_h - T_{n,h}) T_{n,h}\| \leq \frac{C_{\sigma}h^2}{4}.
\end{equation}
Therefore,
\begin{equation}
\begin{aligned}
\| u_h - \mathrm{Pr}_{u_{n,h}} u_h \| &= \| \mathrm{Pr}_{u_{h}} u_h - \mathrm{Pr}_{u_{n,h}} u_h \| = \| \int_{z\in \Gamma_{r}}R_z(T_h) u_h dz -  \int_{z\in \Gamma_{r}}R_z(T_{n,h}) u_h dz \| \\
& \leq \int_{z\in \Gamma_{r}} \| R_z(T_h) u_h -  R_z(T_{n,h}) u_h \| \lvert dz\rvert\\
& \leq \frac{8\pi }{C_{\sigma} h^2}\left(\|(T_h - T_{n,h}) u_h\| + \|(T_h - T_{n,h}) T_{n,h}\|\right)
\end{aligned}
\end{equation}
 
By Lemma~\ref{lm:empbd} and~\ref{lm:normT}, the condition~\eqref{lm:numerical4} holds when
$$
\left( \frac{3}{C_{\delta}^4}+1 \right) h^{-4d} \frac{C_1}{\sqrt{n}h} \leq \frac{C_{\sigma}h^2}{4}, \quad \sqrt{n}h^{d+1}\geq C_2.
$$
This holds when $\sqrt{n}h^{4d+3} \geq \max \left( \left( \frac{3}{C_{\delta}^4}+1 \right) \frac{4C_1}{C_{\sigma}}, C_2 \right)$.
\end{proof}

\subsection{Step III: Convergence Rate of Eigenfunctions}

We are now ready to show the convergence of eigenfunctions. By the triangle inequality and Lemma~\ref{lm:nT} and~\ref{lm:empbd},
\begin{equation}\label{thm:convvec}
\begin{aligned}
\|(T_{n,h} - T_h) u_h \| &\leq \| (\bar{T}_{n,h} - T_h) u_h \| +\| T_{n,h} - \bar{T}_{n,h} \| \\
 & \leq \left(\frac{1}{C_{\delta}^3}h^{-3d} +1 \right) \sup_{f\in \FF_h} |P_n f - P f| \\
& \leq \left(\frac{1}{C_{\delta}^3}h^{-3d} +1 \right) \frac{C_1}{\sqrt{n}h}.
\end{aligned}
\end{equation}

Proposition~18 in~\cite{MR2396807} states that
$$
\|a_n u_{n,h} - u_h\| \leq 2\| u_h - \mathrm{Pr}_{u_{n,h}} u_h \|.
$$
Then we use Lemma~\ref{lm:numerical} to bound the right-hand side to get
\begin{equation}\label{thm:convvec1}
\begin{aligned}
\|a_n u_{n,h} - u_h\| &\leq  \frac{8\pi}{C_{\sigma} h^2} \left( \frac{1}{C_{\delta}^3}h^{-3d} +1 + \left( \frac{3}{C_{\delta}^4}+1 \right)h^{-4d} \right) \left( \frac{C_1}{\sqrt{n}h} \right) \\
&\leq \frac{8\pi C_1}{C_{\sigma}} \left( \frac{1}{C_{\delta}^3}  + \frac{3}{C_{\delta}^4}+2  \right) \frac{1}{\sqrt{n}h^{4d+3}} = \frac{C''}{\sqrt{n}h^{4d+3}},
\end{aligned}
\end{equation}
where $C'' = \frac{8\pi C_1}{C_{\sigma}} \left( \frac{1}{C_{\delta}^3}  + \frac{3}{C_{\delta}^4}+2  \right) $ and $n,h$ satisfy the condition in Lemma~\ref{lm:numerical}. If we require $\sqrt{n}h^{4d+3}\rightarrow \infty$, the right-hand side of~\eqref{thm:convvec1} converges to 0 with probability at least $1-\delta$.

\subsection{Step IV: Convergence Rate of Eigenvalues}

We now deduce the convergence rate of eigenvalues from the convergence rate of eigenfunctions. 

\begin{equation}\label{eq:nlbd}
\begin{aligned}
|\lambda_h - \lambda_{n,h}| &=\|\lambda_h u_h - \lambda_{n,h} u_h\| \leq \|\lambda_h u_h - \lambda_{n,h} a_n u_{n,h}\| + |\lambda_{n,h}|\|u_h - a_n u_{n,h}\| \\
& \leq \|T_h u_h - a_n T_{n,h} u_{n,h}\| + |\lambda_{n,h}|\|u_h - a_n u_{n,h}\| \\
& \leq \|T_h u_h - T_{n,h} u_h\| + \|T_{n,h}\| \|(u_h - a_n u_{n,h})\| + |\lambda_{n,h}|\|u_h - a_n u_{n,h}\| \\
& \leq \left(\frac{1}{C_{\delta}^3} +1 \right) \frac{C_1}{\sqrt{n}h^{3d+1}} + \left(\frac{2}{C_{\delta}h^d} \right)\frac{C''}{\sqrt{n}h^{4d+3}}\\
& \leq \left( \frac{C_1}{C_{\delta}^3} +C_1 +\frac{2C''}{C_{\delta}}\right) \frac{1}{\sqrt{n}h^{5d+3}}= \frac{C'}{\sqrt{n}h^{4d+3}},
\end{aligned}
\end{equation}
where $C' = \frac{C_1}{C_{\delta}^3} +C_1 +\frac{2C''}{C_{\delta}}$. Here the first three inequalities follow from the triangle inequality. The fourth inequality follows from~\eqref{lm:dg1}, \eqref{thm:convvec} and~\eqref{thm:convvec1} and the fact that $|\lambda_{n,h}|\| \leq \|T_{n,h}\|$. As long as $\sqrt{n}h^{5d+3}\rightarrow \infty$, we obtain the convergence of eigenvalues with probability at least $1-\delta$.

\section{Proof of Corollary~\ref{cor:cr}: Convergence to $\Delta_M$}\label{sec:cor:cr}

When $P$ is the uniform distribution on $M$, Theorem~4.3 and Proposition~4.4 in~\cite{DBLP:conf/nips/BelkinN06} imply that there is a constant $C_l>0$ such that
\begin{equation}\label{eq:lbd}
\| \frac{1-\lambda_h}{h^2} - \lambda_i\| \leq C_l h^{\frac{4}{d+6}}.
\end{equation}
Applying Davis-Kahn theorem to $T_h$ and $\Delta_M$ (the Laplacian operator), it is easy to obtain the bound on eigenfunctions
\begin{equation}\label{eq:hbd}
\| u_h - u_i\| \leq C_u h^{\frac{4}{d+6}}.
\end{equation}

Combining~\eqref{eq:lbd} and~\eqref{eq:nlbd}, we get
$$
|\frac{1-\lambda_{n,h}}{h^2} -\lambda_i| \leq \frac{C'}{\sqrt{n}h^{5d+5}} + C_l h^{\frac{4}{d+6}}.
$$
Combining~\eqref{eq:hbd} and~\eqref{thm:convvec1}, we get
$$
\|a_n u_{n,h} - u_i\| \leq \frac{C''}{\sqrt{n}h^{4d+3}} +  C_u h^{\frac{4}{d+6}}.
$$
If we pick $\frac{1}{\sqrt{n}h^{5d+5}} = h$ (or $h = n^{-1/(10d+12)}$), then both the eigenvalue and eigenfunction converge with a rate of $O(h^{\frac{4}{d+6}}) = O(n^{-2/(5d+6)(d+6)})$.

\section{Proof of Corollary~\ref{cor:cs}: Consistency of Spectral Clustering}

We now show the consistency of spectral clustering in the two-region model. We first fix the notation. Suppose the data set $X = \{x_i\}_{i=1}^n$ is ordered such that the first $n_1$ points belong to $M_1$, the second $n_2$ points belong to $M_2$ where $n_1+n_2=n$. The graph Laplacian can be decomposed as
$$
\frac{I - \hat{T}_{n,h}^{u}}{h^2} = \left[ \begin{array}{cc}
A_{P_1,h} & B_h \\
B_h^T & A_{P_2,h} \end{array} \right] = \left[ \begin{array}{cc}
A_{P_1,h} & \mb{0} \\
\mb{0} & A_{P_2,h} \end{array} \right]+\left[ \begin{array}{cc}
\mb{0} & B_h \\
B_h^T & \mb{0} \end{array} \right].
$$
The degree matrix $D_{n,h}$ has a block decomposition $D_{11,h} \oplus D_{22,h}$. Let $\hat{T}_{n_i,h}^u$ and $D_{n_i,h}$ denote the degree matrix computed within each cluster for $i=1,2$ respectively. We note that $D_{ii,h} \not= D_{n_i,h}$.

Since $\dist(M_1, M_2) \geq c_d$, we have
$$
\max_{i,j}(B_h)_{ij} \leq \exp\{-c_d^2/h^2\}.
$$ 
This implies that entries of $D_{ii,h} - D_{n_i,h}$ never exceed $n \exp\{-c_d^2/h^2\}$. Moreover, 
$$
A_{P_i,h} = \frac{1}{h^2} (I - D_{ii,h}^{-1}D_{n_i, h} \hat{T}_{n_i,h}^{u}) = \frac{I - \hat{T}_{n_i,h}^{u}}{h^2} + \frac{1}{h^2} D_{ii,h}^{-1} (D_{ii,h} - D_{n_i,h}) \hat{T}_{n_i,h}^{u}.
$$
Let $E_i = \frac{1}{h^2} D_{ii,h}^{-1} (D_{ii,h} - D_{n_i,h}) \hat{T}_{n_i,h}^{u}$ and $F_i = \frac{I - \hat{T}_{n_i,h}^{u}}{h^2}$. We note that
$$
\frac{I - \hat{T}_{n,h}^{u}}{h^2} = F_1 \oplus F_2 + E_1\oplus E_2 +\left[ \begin{array}{cc}
 \mb{0} & B_h \\
 B_h^T & \mb{0} \end{array} \right].
$$
With probability at least $1-\delta$,
\begin{equation}\label{eq:pert}
\begin{aligned}
\|E_i\| &\leq  \frac{1}{h^2}\|D_{ii,h}^{-1}\| \|D_{ii,h} - D_{n_i,h}\| \|\hat{T}_{n_i,h}^{u}\| \\
& \leq \frac{1}{h^2} \frac{1}{C_{\delta}h^d}    n \exp\{-c_d^2/h^2\} \frac{1}{C_{\delta}h^d} = \frac{n}{C_{\delta}^2 h^{2d+2}} \exp\{-c_d^2/h^2\}.
\end{aligned}
\end{equation}
The second inequality follows from the bounds of $d_{n,h}$ and $\|\hat{T}_{n_i,h}^{s}\|$ in Lemma~\ref{lm:dg} by noting that $\|D_{ii,h}^{-1}\|$ is diagonal and $\|\hat{T}_{n_i,h}^{u}\|$ has the same eigenvalues of $\|\hat{T}_{n_i,h}^{s}\|$ (assuming $\sqrt{n_i}h^{d+1}\geq C_2$). 

Moreover, we have
\begin{equation}\label{eq:pert1}
\begin{aligned}
\left\|\left[ \begin{array}{cc}
 \mb{0} & B_h \\
 B_h^T & \mb{0} \end{array} \right]\right\| & \leq n \max_{i,j}(B_h)_{ij} \leq n \exp\{-c_d^2/h^2\}.
\end{aligned}
\end{equation}

Let $0 = \lambda_{0} = \lambda_{1} < \lambda_2 \leq \cdots$ be the eigenvalues of $\Delta_{P_1}\oplus\Delta_{P_2}$ and $\lambda_{0,h}^n < \lambda_{1,h}^n \leq \cdots$ be those of $F_1 \oplus F_2$. Theorem~\ref{thm:cr} states that $\lambda_{i,h}^n$ converges to $\lambda_i$ as $h\rightarrow 0$ and $\sqrt{n}h^{5d+3}\rightarrow \infty$. Therefore there exists $N_0, h_0 > 0$ such that when $h<h_0$ and $n>N_0$,
$$
\dist(\{\lambda_{0,h}^n\} \cup \{\lambda_{1,h}^n\}, \cup_{i>2}\{\lambda_{i,h}^n\} ) \geq \frac{1}{2}\lambda_2.
$$
On the other hand,~\eqref{eq:pert} and~\eqref{eq:pert1} imply that
$$
\|\frac{I - \hat{T}_{n,h}^{u}}{h^2} - F_1 \oplus F_2 \| \leq \frac{n}{C_{\delta}^2 h^{2d+2}} \exp\{-c_d^2/h^2\} +n \exp\{-c_d^2/h^2\}.
$$

This means that the norm of the perturbation of the graph Laplacian from $F_1 \oplus F_2$ is much smaller than $\frac{1}{2} \lambda_2$, the gap lower bound between the 2nd and 3rd eigenvalues of $F_1 \oplus F_2$ when $h\rightarrow 0$, $\sqrt{n}h^{5d+3}\rightarrow \infty$. By Davis-Kahn theorem, the first two eigenfunctions of $\frac{I - \hat{T}_{n,h}^{u}}{h^2}$ or $\frac{I - \hat{T}_{n,h}}{h^2}$ converge to those of $\Delta_{P_1}\oplus\Delta_{P_2}$, (e.g. the indicator functions $\mb{1}_{M_1}$ and $\mb{1}_{M_2}$). The convergence of eigenfunctions implies the convergence of the eigen-mapping. Then the consistency of the spectral clustering is obtained from the stability of $K$-means clustering on the mapped dataset (see Theorem~1.8 of~\cite{Nicolas15}).

\section{A Numerical Study: The Noisy Case}

Let $T_{\sigma} M = \{x | \dist(x, M) \leq \sigma\}$ be the tubular neighborhood of $M$ with a probability $P_{T_{\sigma} M}$. When the dataset $X$ is i.i.d.~sampled from the tubular neighborhood of $M$, we define the function spaces $\mathcal{K}_h$, $\HH_h$, $u\HH_h$, $\HH_h \HH_h$ and $\FF_h$ over $T_{\sigma} M$ similarly. Since $T_{\sigma} M$ is $D$ dimensional, we obtain the spectral convergence when $\sqrt{n}h^{5D+3}\rightarrow \infty$ by the same analysis. The convergence rate is hugely different from that in Theorem~\ref{thm:cr} when $d \ll D$. We use a very simple experiment to demonstrate how the dimension affects the quality of graphs. 

Although the theoretical analysis assumes every entry $A_{ij}=k_h(x_i, x_j)$ of the similarity matrix $A$ is non-zero, a thresholding ($r$-neighbor or $k$-neighbor) is applied to $A$ in practice. That is, $A_{ij}$ is non-zero only when $x_i$ is in the $r$-neighbor (or $k$-neighbor) of $x_j$, or vice versa. Zelnik and Perona~\cite{DBLP:conf/nips/Zelnik-ManorP04} advocate choosing $h$ to be the distance to the $k$-th neighbor. This is indeed a soft version of the $k$-neighbor construction. In general, there is lack of discussion in the literature w.r.t. the effect of different bandwidth choices. 

We note that the $r$-neighbor thresholding is equivalent to directly thresholding the value $k_h(x_i,x_j)$ when the bandwidth $h$ is fixed. To better observe the effect when $h$ varies, we take the natural strategy of directly thresholding the entry value $A_{ij}$, that is, setting $A_{ij}=0$ if $k_h(x_i, x_j)\leq c$ for a small constant $c$. The goal of the numerical experiment is to investigate the connectivity of the graph under this construction. Indeed, the graph of an unknown manifold to be connected is a necessary condition to tasks such as obtaining low-dimensional coordinates via algorithms such as Laplacian eigenmaps.

\subsection{Noise-free case}

The sample dataset $X=\{x_i\}_{i=1}^n$ is i.i.d.~sampled uniformly from the unit circle $\Sb^1$ in $\R^D$. The entry value $A_{ij}=0$ if $k_h(x_i, x_j)\leq 0.001$. Given $n$ and $D$, we generate $K=300$ sample datasets. For each sample dataset, the minimal value of $h$ that ensures a connected graph is found. Then the mean of $h$ across $K$ samples is reported in Figure~\ref{fig:noisefree}. It is obvious that the connectivity is independent of the ambient dimension $D$ in the noisefree case.

\begin{figure*}[htb!]
\centering
\includegraphics[width=0.8\textwidth]{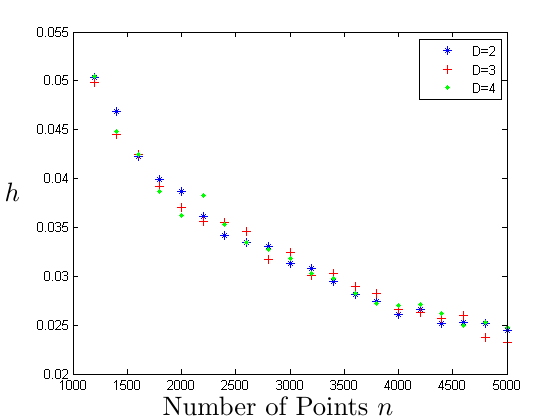}
\caption{mean value of $h$ that generates a connected graph without noise.}
\label{fig:noisefree}
\end{figure*}

\subsection{Noisy case}

The sample dataset $\{x_i\}_{i=1}^n$ is i.i.d.~sampled uniformly from the unit circle $\Sb^2$ in $\R^D$. The dataset $X=\{\hat{x}_i\}_{i=1}^n$ is created by picking $\hat{x}_i$ uniformly from the $r$-ball $B(x_i,r)$. We note that $X$ is sampled from a distribution very similar (not exactly equal) to the uniform distribution of the tubular neighborhood $T_r(\Sb^1)$. In this experiment, we use $r=0.1$. The entry value $A_{ij}=0$ if $k_h(x_i, x_j)\leq 0.001$. Given $n$ and $D$, we generate $K=300$ sample datasets. For each sample dataset, the minimal value of $h$ that ensures a connected graph is found. Then the mean of $h$ across $K$ samples is reported in Figure~\ref{fig:noisy}. We note that $h$ has to be picked larger to ensure connectivity as the ambient dimension $D$ increases. 

\begin{figure*}[htb!]
\centering
\includegraphics[width=0.8\textwidth]{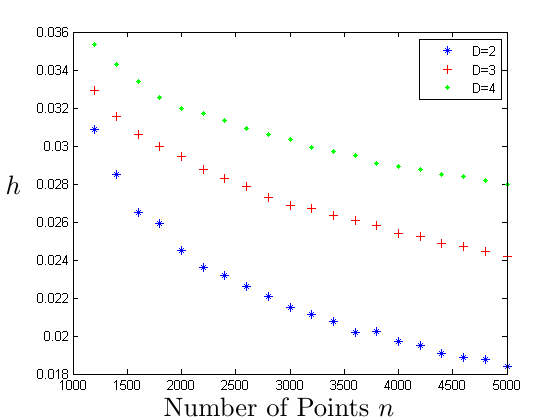}
\caption{mean value of $h$ that generates a connected graph with noise.}
\label{fig:noisy}
\end{figure*}

\section{Conclusion}

This work establishes the spectral convergence rate of graph Laplacians to the underlying Laplacian operator on $M$. Through a preliminary numerical study, we also emphasize the ``difficulty'' of applying spectral algorithms in high dimension and the necessity of a denoising step. We also provide a list of open questions for future research. The first question is about the quadratical dependence on $d$ of the convergence rate. It is unclear whether a linear dependence can be achieved, which we believe is related to the convergence from $T_h$ to $\Delta_M$. For the spectral clustering algorithm, the convergence rate is still unknown since the convergence rate analysis of the $K$-mean step is missing. We also show theoretically and empirically that the convergence rate of the graph Laplacian depends on the ambient dimension $D$ once noise is considered. It is interesting to develop denoising techniques for spectral methods, that can yield a convergence rate independent of the ambient dimension $D$.

{\small
\bibliography{biblio}{}

\begin{thebibliography}{10}

\bibitem{MR0220105}
Kendall~E. Atkinson.
\newblock The numerical solutions of the eigenvalue problem for compact
  integral operators.
\newblock {\em Trans. Amer. Math. Soc.}, 129:458--465, 1967.

\bibitem{DBLP:conf/nips/BelkinN01}
Mikhail Belkin and Partha Niyogi.
\newblock Laplacian eigenmaps and spectral techniques for embedding and
  clustering.
\newblock In {\em Advances in Neural Information Processing Systems 14 [Neural
  Information Processing Systems: Natural and Synthetic, {NIPS} 2001, December
  3-8, 2001, Vancouver, British Columbia, Canada]}, pages 585--591, 2001.

\bibitem{DBLP:conf/nips/BelkinN06}
Mikhail Belkin and Partha Niyogi.
\newblock Convergence of laplacian eigenmaps.
\newblock In {\em Advances in Neural Information Processing Systems 19,
  Proceedings of the Twentieth Annual Conference on Neural Information
  Processing Systems, Vancouver, British Columbia, Canada, December 4-7, 2006},
  pages 129--136, 2006.

\bibitem{MR2460286}
Mikhail Belkin and Partha Niyogi.
\newblock Towards a theoretical foundation for {L}aplacian-based manifold
  methods.
\newblock {\em J. Comput. System Sci.}, 74(8):1289--1308, 2008.

\bibitem{MR2206766}
R.~R. Coifman.
\newblock Perspectives and challenges to harmonic analysis and geometry in high
  dimensions: geometric diffusions as a tool for harmonic analysis and
  structure definition of data.
\newblock In {\em Perspectives in analysis}, volume~27 of {\em Math. Phys.
  Stud.}, pages 27--35. Springer, Berlin, 2005.

\bibitem{Donoho13052003}
David~L. Donoho and Carrie Grimes.
\newblock Hessian eigenmaps: Locally linear embedding techniques for
  high-dimensional data.
\newblock {\em Proceedings of the National Academy of Sciences},
  100(10):5591--5596, 2003.

\bibitem{DBLP:conf/colt/HeinAL05}
Matthias Hein, Jean{-}Yves Audibert, and Ulrike von Luxburg.
\newblock From graphs to manifolds - weak and strong pointwise consistency of
  graph laplacians.
\newblock In {\em Learning Theory, 18th Annual Conference on Learning Theory,
  {COLT} 2005, Bertinoro, Italy, June 27-30, 2005, Proceedings}, pages
  470--485, 2005.

\bibitem{DBLP:conf/nips/NgJW01}
Andrew~Y. Ng, Michael~I. Jordan, and Yair Weiss.
\newblock On spectral clustering: Analysis and an algorithm.
\newblock In {\em Advances in Neural Information Processing Systems 14 [Neural
  Information Processing Systems: Natural and Synthetic, {NIPS} 2001, December
  3-8, 2001, Vancouver, British Columbia, Canada]}, pages 849--856, 2001.

\bibitem{MR2238670}
A.~Singer.
\newblock From graph to manifold {L}aplacian: the convergence rate.
\newblock {\em Appl. Comput. Harmon. Anal.}, 21(1):128--134, 2006.

\bibitem{Amit14}
Amit Singer and Hau tieng Wu.
\newblock Spectral convergence of the connection laplacian from random samples.
\newblock {\em ArXiv e-prints}, 2014.

\bibitem{Nicolas15}
N.~G. Trillos and D.~Slep$\check{c}$ev.
\newblock A variational approach to the consistency of spectral clustering.
\newblock {\em ArXiv e-prints}, 2015.

\bibitem{luxburg_tutorial}
Ulrike v.~L.
\newblock A tutorial on spectral clustering.
\newblock {\em Statistics and Computing}, 17(4):395--416, 2007.

\bibitem{MR2396807}
Ulrike von Luxburg, Mikhail Belkin, and Olivier Bousquet.
\newblock Consistency of spectral clustering.
\newblock {\em Ann. Statist.}, 36(2):555--586, 2008.

\bibitem{MRBM15}
X.~Wang and G.~{Lerman}.
\newblock Nonparametric bayesian regression on manifolds via brownian motion.
\newblock {\em ArXiv e-prints}, 2015.

\bibitem{DBLP:conf/nips/Zelnik-ManorP04}
Lihi Zelnik{-}Manor and Pietro Perona.
\newblock Self-tuning spectral clustering.
\newblock In {\em Advances in Neural Information Processing Systems 17 [Neural
  Information Processing Systems, {NIPS} 2004, December 13-18, 2004, Vancouver,
  British Columbia, Canada]}, pages 1601--1608, 2004.

\end{thebibliography}
\bibliographystyle{plain}
}

\end{document}